\newtheorem{thm}{Theorem}[section]
\newtheorem{ex}[thm]{Example}
\newtheorem{coro}[thm]{Corollary}
\newtheorem{defn}[thm]{Definition}
\newtheorem{rem}[thm]{Remark}
\newtheorem{prop}[thm]{Proposition}
\newcommand{\levy}{{L\'evy }}
\newcommand{\cadlag}{{c\`adl\`ag }}
\def\1{\mathds{1}}
\def\R{\mathbb{R}}
\def\P{\mathbb{P}}
\def\E{\mathbb{E}}
\def\N{\mathbb{N}}
\def\T{\mathbb{T}}
\def\dd{\mathrm{d}}
\def\law{\overset{\textnormal{law}}{=}}
\def\tp{\top}
\def\F{\mathcal{F}}
\def\xib{\boldsymbol{\xi}}
\DeclareMathOperator*{\argmin}{argmin}
\begin{document}
\title{Random-Bridges as Stochastic Transports \\
for Generative Models}
\author[1]{Stefano Goria}
\author[1,2]{Levent Ali Meng\"ut\"urk}
\author[1]{Murat Cahit Meng\"ut\"urk}
\author[1]{M. Berkan Sesen}

\affil[1]{Artificial Intelligence and Mathematics Research Lab\par
James Carter Road, Mildenhall, Bury St. Edmunds IP28 7DE, UK}

\affil[2]{Department of Mathematics, University College London\par
25 Gordon Street, London, 
WC1H 0AY, UK}

\date{}
\maketitle
\begin{abstract}
This paper motivates the use of random-bridges -- stochastic processes conditioned to take target distributions at fixed timepoints -- in the realm of generative modelling. Herein, random-bridges can act as stochastic transports between two probability distributions when appropriately initialized, and can display either Markovian or non-Markovian, and either continuous, discontinuous or hybrid patterns depending on the driving process. We show how one can start from general probabilistic statements and then branch out into specific representations for learning and simulation algorithms in terms of information processing. Our empirical results, built on Gaussian random bridges, produce high-quality samples in significantly fewer steps compared to traditional approaches, while achieving competitive Fr\'echet inception distance scores. Our analysis provides evidence that the proposed framework is computationally cheap and suitable for high-speed generation tasks.
\end{abstract}
{\bf Keywords:} Generative Models, Gaussian processes, Levy processes, Random Bridges

\section{Introduction}
\label{Introduction}
Over the last decade, diffusion processes have been successfully put in practice with an ambition to generate and study a wide variety of unstructured data (e.g. text, image, audio) within the scope of generative modelling. In the existing literature, arguably the most commonly studied framework is Denoising Diffusion Probabilistic Model (DDPM) -- see \cite{8a,12,13} -- which produces highly impressive empirical results comparable to that of the more conventional Generative Adversarial Networks (GANs), without having to perform adversarial training. The recent approach making use of diffusion processes attracted significant attention from academics and practitioners, giving birth to a rich range of papers in the scope of generative AI with successful outcomes. For example, in this vein, \cite{13a} shows that with a few simple modifications to the original DDPM algorithm, it is possible to achieve competitive log-likelihoods. 
Using non-Markovian processes, \cite{13b} demonstrates how sampling can be 10-50 times faster with iterative implicit probabilistic models applying the same training procedure as DDPMs. In the same spirit, \cite{19,20,21,22} alleviate the slow sampling problem of DDPMs via distillation-based approaches. For example, \cite{19} provides a training program that samples diffusion models in only 1-4 steps while maintaining high quality large-scale images.

We shall highlight that the DDPM approach relies on making use of stochastic diffusion models that map Gaussian noise onto some target data via a noising-denoising protocol through a time-reversal construct -- as such, these applications are often based on the assumption that prior distribution is Gaussian, and that the learning algorithm requires a model-dependent denoising machinery via backward-time transitions based on the works of \cite{1,2,3}. Essentially, a crucial aspect of DDPM methods is the gradual addition of random noise to some target data, followed by a step-by-step removal of this noise back to creating new samples from the target distribution. This framework, although empirically powerful, exhibits limitations. First, it excludes scenarios in which the prior distribution is \emph{not} Gaussian but of another specific reference distribution (e.g. for any image-to-image mapping). This constraint puts the framework in a narrower scope by reducing the flexibility of addressing any arbitrary reference-to-target distribution transports. Second, the noising procedure towards achieving Gaussian noise leads to ad-hoc solutions that yield biased sampling procedures (i.e. in finite-steps, the reference Gaussian distribution can be poorly approximated via progressively injecting Gaussian noise to target data). Third, the methodology engages closely with iterative denoising approximations that can prove computationally costly. Fourth, DDPM relies on Brownian motion-driven models, which essentially produce interim distributions that may not be relevant to the target distribution and its relation to the reference distribution.

Our framework argues that the reliance of DDPM on \textit{denoising-steps}, as part of a bi-directional learning algorithm, is neither necessary, nor adequately flexible, nor computationally efficient in ensuring coupling in any data-to-data translation. By imposing a one-directional map between practically any reference-to- target distributions, we employ \emph{randomised} bridge processes to remove the denoising procedure by structure, bring a flexibility of choice for the underlying distributions, and maintain tractability of formulation. Accordingly, this paper proposes a deeper connection between generative AI and information processing, whereby randomised bridges can be viewed as noisy information processes -- see, for example \cite{5,6,7,8,9,10,15} and references therein. In short, the main idea is to dispose of the strict dependence on bi-directional procedures, and handle  probabilistic convergence through single-directional paths, and obtain comparable results for a wider class of data-to-data transports. In doing so, we provide a general framework that does \emph{not} start with Brownian motion-driven diffusion models, but can encapsulate any process with paths that are right-continuous with left-limits (c\`adl\`ag). One of the advantages of such an extension can be presented through an example: imagine that the reference-to-target transport follows a form of monotonic translation, whereby every random element sampled from the reference distribution is smaller than every random element sampled from the target distribution -- e.g. one can generate audio from a reference sample onto a higher pitch. In such scenarios, a stochastic map driven by Brownian motion will explore unnecessary spatial samples in the interim steps of the transport that may, in fact, only require an order-preserving relationship between the reference and target data. The generality of our formulation can address such situations via the usage of better-fitting, say, non-decreasing stochastic processes (e.g. via Poisson bridges).

We highlight the works of \cite{14, 16b, 17, 23, 24, 25, 26}, who also opt for the usage of bridge processes in the improvement of generative modelling techniques, and demonstrate promising findings while offering the ability to transport between practically any reference-to- target distributions without any noising protocols -- accordingly, this research direction shares our philosophy at its core. One of the main aspects, in which our paper differs from this recently-emerging stream, is the generality of our construct. More precisely, as opposed to building our framework by starting off with a diffusion process (driven by Brownian motion) that is later transformed into a diffusion-bridge via Doob $h$-transforms, we begin the journey on a significantly more general probabilistic footing without any model specifications, such that the discussion can be widened to include any c\`adl\`ag process (e.g. displaying Markovian or non-Markovian, continuous or discontinuous patterns, to name a few). It is only after establishing an overarching mathematical definition that we choose to branch out and derive various stochastic differential equations exhibiting bridge-behavior. In fact, we can recover Brownian motion-driven diffusion bridges as a specific example of our framework. Owing to this difference, we end up with a significant variety of interim distributions that help us peruse and record the evolution of many possible informational transitions between reference and target data (e.g. Gaussian, Poisson, Gamma, amongst many more). On the other hand, the {S}chr{\"o}dinger bridge approach uses Optimal Transport as a powerful and pivotal mathematical tool in addressing generative tasks. Our framework does not ask for Optimal Transport, but provides a mathematical definition that produces stochastic transports by construct. The {S}chr{\"o}dinger bridge approach involves diffusion matching, where Flow Matching \cite{27, 28} can be recovered as a deterministic limit, wherein ordinary differential equations govern conditional density paths through push-forward operators. Our framework does not follow matching algorithms and provides different training and sampling procedures.
We underline that our framework is no longer strictly a part of the diffusion-models literature (due to all the non-diffusion processes we can encapsulate), and hence, is an augmented direction in generative modelling. 

Since the main goal of this paper is to introduce an alternative theoretical groundwork, we reserve our empirical application to be parsimonious in nature: we generate noise-to-image translations via Brownian random bridges instead of venturing into more complex image-to-image translations via more exotic random bridges that our framework hosts. This immediately allows us to compare our random bridge construct with two baseline models in the DDPM literature: (i) simple denoising diffusion by \cite{12}, and (ii) improved denoising diffusion by \cite{13}. The comparison focuses on computational efficiency and generation quality across different sampling configurations. Our findings give evidence that our Bridge diffusion model acquires strong Fr\'echet inception distance (FID) scores with only 10 sampling steps (which is nearly 10 times faster than the improved denoising diffusion model). This aligns with our framework's emphasis on direct probabilistic transports through single-directional paths, rather than relying on iterative bi-directional noising-denoising protocols. Although the empirical comparisons presented in this work do not include important recent advancements (e.g. distillation-based approaches, {S}chr{\"o}dinger bridges and flow matching), our initial findings encourage us to continue with this line of research and reserve a significantly more encompassing empirical experiment for future.

The paper is structured as follows. Section 2 builds our mathematical framework and establishes relevant theoretical results for Gaussian random bridges as an example. This section also presents the numerical framework, detailing the training and simulation algorithms. Section 3 demonstrates our empirical results on MNIST and CIFAR-10 datasets. Section 4 includes the Lévy random bridge framework and its theoretical results, where we highlight the intriguing similarity of training and simulation algorithms between the Gaussian and \levy frameworks; motivating the stance of the general framework we offer. Section 5 concludes with a discussion of future research directions.

\section{Mathematical Framework}
We work on a probability space $(\Omega,\F,\P)$ equipped with a right-continuous and complete filtration $\{\F_{t}\}_{t \in \T}$, where $\T = [0,T]$ for some $T <\infty$. For some $n\in\N_+$, let
\begin{align}
\boldsymbol{X} = [X^{(1)}, \ldots,X^{(j)}, \ldots, X^{(n)}]^{\tp} \sim \boldsymbol{\Phi}
\end{align}
be a $\boldsymbol{\Phi}$-distributed square-integrable random variable with state-space $(\mathbb{X},\mathcal{B}(\mathbb{X}))$, where $\mathbb{X} \subseteq \R^n$ and $\mathcal{B}(\cdot)$ the Borel $\sigma$-field, and 
\begin{align}
\boldsymbol{Y}= [Y^{(1)}, \ldots,Y^{(j)}, \ldots, Y^{(n)}]^{\tp} \sim \boldsymbol{\Psi}
\end{align}
be a $\boldsymbol{\Psi}$-distributed square-integrable random variable with state-space $(\mathbb{Y},\mathcal{B}(\mathbb{Y}))$, where $\mathbb{Y} \subseteq \R^n$. Here, both $\boldsymbol{\Phi}$ and $\boldsymbol{\Psi}$ are measures on $(\Omega,\F)$. We choose $\boldsymbol{\Phi}\ll \P$ and $\boldsymbol{\Psi}\ll \P$, so that both $\boldsymbol{\Phi}$ and $\boldsymbol{\Psi}$ are absolutely continuous with respect to $\P$; the Radon-Nikodym derivatives $\dd \boldsymbol{\Phi} / \dd \P$ and $\dd \boldsymbol{\Psi} / \dd \P$ exist. Since $\boldsymbol{\Phi}$, $\boldsymbol{\Psi}$ and $\P$ are measures on the same-measurable-space, the absolute-continuity statements above are well-defined. We denote $\boldsymbol{\Gamma}(\boldsymbol{\Phi},\boldsymbol{\Psi})$ as a joint probability measure on $\R^n\times\R^n$ with marginals $\boldsymbol{\Phi}$ and $\boldsymbol{\Psi}$.
\begin{rem}
Note that $\boldsymbol{\Gamma}(\boldsymbol{\Phi},\boldsymbol{\Psi})$ is a coupling on $(\Omega \times \Omega, \F \otimes \F)$.
\end{rem}
In this paper, $\{\boldsymbol{Z}_t\}_{t\in\T}$ denotes an $\R^n$-valued stochastic process we call the \emph{driving} process, which is right-continuous with left-limits (c\`adl\`ag). To begin, we do not need to specify $\{\boldsymbol{Z}_t\}_{t\in\T}$ any further and denote its natural filtration by $\{\mathcal{F}_t^{\boldsymbol{Z}}\}_{t\in\T}$ such that $\mathcal{F}_t^{\boldsymbol{Z}} \subset \mathcal{F}_t$ is a subalgebra for every $t\in\T$. We can now define the main mathematical object of this work in the spirit of \cite{8,11}, while including random-initialization.
\begin{defn}
\label{definitionRandomBridge}
If $\{\xib_t\}_{t\in\T}$ is an $\R^n$-valued $\{\F_{t}\}$-adapted stochastic process such that 
\begin{enumerate}
\item $(\xib_0,\xib_T) \sim \boldsymbol{\Gamma}(\boldsymbol{\Phi},\boldsymbol{\Psi})$, 
\item For all $m\in\mathbb{N}_{+}$, every $0 < t_{1}<\ldots<t_{m}<T$ and $(\boldsymbol{z}_{1},\ldots,\boldsymbol{z}_{m})\in\R^{n\times m}$,
\begin{align}
&\mathbb{P}\left(\xib_{t_{1}}\leq \boldsymbol{z}_{1},\ldots, \xib_{t_{m}}\leq \boldsymbol{z}_{m} \, \left| \, \xib_0=\boldsymbol{x}, \xib_T=\boldsymbol{y}\right.\right)= 
    \mathbb{P}\left(\boldsymbol{Z}_{t_{1}}\leq \boldsymbol{z}_{1},\ldots, \boldsymbol{Z}_{t_{m}}\leq \boldsymbol{z}_{m} \, \left| \, \boldsymbol{Z}_0=\boldsymbol{x}, \boldsymbol{Z}_T=\boldsymbol{y}\right.\right) \nonumber
\end{align}
for $\boldsymbol{\Phi}$-almost-everywhere $\boldsymbol{x}\in\R^n$ and $\boldsymbol{\Psi}$-almost-everywhere $\boldsymbol{y}\in\R^n$
\end{enumerate}
then $\{\xib_t\}_{t\in\T}$ is a $(\boldsymbol{\Phi}, \boldsymbol{\Psi})$-bridge under action of $\{\boldsymbol{Z}_t\}_{t\in\T}$.
\end{defn}
Definition \ref{definitionRandomBridge} is a statement with almost no model-specific assumptions. Although this may look almost \emph{too} abstract to be useful at first sight, such a definition, in essence, provides a concise mathematical recipe for constructing and simulating different bridge-patterns, which forms the main direction of this paper. 
\begin{rem}
\label{remstochastictransport}
We can set $\boldsymbol{X} = \xib_0 \sim \boldsymbol{\Phi}$ and $\boldsymbol{Y} = \xib_T \sim \boldsymbol{\Psi}$ from Definition \ref{definitionRandomBridge}, since $\boldsymbol{\Gamma}(\boldsymbol{\Phi},\boldsymbol{\Psi})$ is a coupling. Accordingly, $(\boldsymbol{\Phi}, \boldsymbol{\Psi})$-bridges can act as stochastic transports between $\boldsymbol{X}$ and $\boldsymbol{Y}$. 
\end{rem}
Although a given c\`adl\`ag $\{\boldsymbol{Z}_t\}_{t\in\T}$ can generate a stochastic map $\{\xib_t\}_{t\in\T}$ between the two measures, $\{\xib_{t}\}_{t\in\T}$ does not necessarily have the same law as $\{\boldsymbol{Z}_{t}\}_{t\in\T}$ due to Property 1. in Definition \ref{definitionRandomBridge}, as formalized below.
\begin{prop}
The equality in law $\{\xib_{t}\}_{t\in\T} \law\{\boldsymbol{Z}_{t}\}_{t\in\T}$ holds if and only if $(\boldsymbol{Z}_0,\boldsymbol{Z}_T) \sim \boldsymbol{\Gamma}(\boldsymbol{\Phi},\boldsymbol{\Psi})$.
\end{prop}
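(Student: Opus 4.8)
The plan is to reduce equality in law of the two processes to equality of all their finite-dimensional distributions, and then to disintegrate every such distribution over the pair of endpoint values at times $0$ and $T$. The two clauses of Definition \ref{definitionRandomBridge} split this disintegration into an ``endpoint part'' and a ``conditional part'': Property 2 forces the conditional parts of $\{\xib_t\}_{t\in\T}$ and $\{\boldsymbol{Z}_t\}_{t\in\T}$ to coincide, so the only remaining degree of freedom is the joint law of the endpoints, which is exactly what the condition $(\boldsymbol{Z}_0,\boldsymbol{Z}_T) \sim \boldsymbol{\Gamma}(\boldsymbol{\Phi},\boldsymbol{\Psi})$ pins down.

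The forward implication is immediate: if $\{\xib_t\}_{t\in\T} \law \{\boldsymbol{Z}_t\}_{t\in\T}$, then in particular the two-dimensional marginals at times $0$ and $T$ agree, giving $(\boldsymbol{Z}_0,\boldsymbol{Z}_T) \law (\xib_0,\xib_T)$; by Property 1 of Definition \ref{definitionRandomBridge} the latter pair is $\boldsymbol{\Gamma}(\boldsymbol{\Phi},\boldsymbol{\Psi})$-distributed, whence $(\boldsymbol{Z}_0,\boldsymbol{Z}_T) \sim \boldsymbol{\Gamma}(\boldsymbol{\Phi},\boldsymbol{\Psi})$.

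For the converse I would fix arbitrary times and augment the chosen time-set with $0$ and $T$ (marginalising afterwards recovers any finite-dimensional distribution that omits an endpoint), so that it suffices to match, for interior times $0 < t_{1}<\cdots<t_{m}<T$ and Borel sets, the joint law of $(\xib_0,\xib_{t_1},\ldots,\xib_{t_m},\xib_T)$ with that of $(\boldsymbol{Z}_0,\boldsymbol{Z}_{t_1},\ldots,\boldsymbol{Z}_{t_m},\boldsymbol{Z}_T)$. Conditioning each side on its pair of endpoint values and using regular conditional distributions (which exist, since the state spaces are Borel subsets of $\R^n$), I would express each finite-dimensional distribution as an integral, against the law of the endpoints, of the conditional law of the interior coordinates. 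For $\{\xib_t\}_{t\in\T}$ the integrating measure is $\boldsymbol{\Gamma}(\boldsymbol{\Phi},\boldsymbol{\Psi})$ by Property 1; for $\{\boldsymbol{Z}_t\}_{t\in\T}$ it is the law of $(\boldsymbol{Z}_0,\boldsymbol{Z}_T)$, which equals $\boldsymbol{\Gamma}(\boldsymbol{\Phi},\boldsymbol{\Psi})$ under the hypothesis. Property 2 guarantees the two conditional laws agree for $\boldsymbol{\Phi}$-almost-every $\boldsymbol{x}$ and $\boldsymbol{\Psi}$-almost-every $\boldsymbol{y}$; since $\boldsymbol{\Gamma}(\boldsymbol{\Phi},\boldsymbol{\Psi})$ has marginals $\boldsymbol{\Phi}$ and $\boldsymbol{\Psi}$, any exceptional set of the form $N\times\R^n$ or $\R^n\times N'$ is $\boldsymbol{\Gamma}$-null, so the integrands coincide $\boldsymbol{\Gamma}$-almost-everywhere and the two integrals are equal. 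Equality of all finite-dimensional distributions then yields $\{\xib_t\}_{t\in\T} \law \{\boldsymbol{Z}_t\}_{t\in\T}$.

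The main obstacle I anticipate is measure-theoretic bookkeeping rather than any deep idea: one must justify the disintegration by invoking existence of regular conditional distributions and checking that the chosen versions are those implicitly used in Definition \ref{definitionRandomBridge}, and then transfer the ``$\boldsymbol{\Phi}$-a.e.\ / $\boldsymbol{\Psi}$-a.e.'' clause of Property 2 into a ``$\boldsymbol{\Gamma}$-a.e.'' statement via the marginal structure of the coupling, so that the null exceptional sets leave the integrals unchanged. Handling the endpoints cleanly---Property 2 is stated only for strictly interior times---is the one place requiring care, and the augmentation-then-marginalisation device above is precisely what resolves it.
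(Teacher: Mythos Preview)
Your proposal is correct and follows essentially the same route as the paper: match finite-dimensional distributions by disintegrating over the endpoint pair $(\cdot_0,\cdot_T)$, use Property~2 to identify the conditional parts, and let the hypothesis on $(\boldsymbol{Z}_0,\boldsymbol{Z}_T)$ identify the endpoint laws. The paper's version is terser---it omits the forward implication, does not spell out the augmentation-then-marginalisation device for handling the endpoints, and closes by naming the Kolmogorov extension theorem rather than your direct ``equality of all finite-dimensional distributions'' statement---so your write-up is in fact more complete on the measure-theoretic bookkeeping you flagged.
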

\begin{proof}
Since $\boldsymbol{\Phi} \ll \P$, we have  $\P(\boldsymbol{Z}_{0} \in \dd\boldsymbol{x})=0$ implies $\boldsymbol{\Phi}(\dd\boldsymbol{x})=0$; thus, $\boldsymbol{\Phi}(\dd\boldsymbol{x})>0$ implies $\P(\boldsymbol{Z}_{0} \in \dd\boldsymbol{x})>0.$ Similarly, since $\boldsymbol{\Psi} \ll \P$, we have $\boldsymbol{\Psi}(\dd\boldsymbol{y})>0$ implies $\P(\boldsymbol{Z}_{T} \in \dd\boldsymbol{y})>0.$ 
If $(\boldsymbol{Z}_0,\boldsymbol{Z}_T) \sim \boldsymbol{\Gamma}(\boldsymbol{\Phi},\boldsymbol{\Psi})$, then $\P(\boldsymbol{Z}_0 \in \dd \boldsymbol{x})=\boldsymbol{\Phi}(\dd \boldsymbol{x})$ for $\boldsymbol{\Phi}$-almost-everywhere $\boldsymbol{x}\in\R^n$ and $\P(\boldsymbol{Z}_T \in \dd \boldsymbol{y})=\boldsymbol{\Psi}(\dd \boldsymbol{y})$ for $\boldsymbol{\Psi}$-almost-everywhere $\boldsymbol{y}\in\R^n$.
Therefore, if $(\boldsymbol{Z}_0,\boldsymbol{Z}_T) \sim \boldsymbol{\Gamma}(\boldsymbol{\Phi},\boldsymbol{\Psi})$ , we have
\begin{align}
&\P\left(\xib_{0}\leq \boldsymbol{x}, \xib_{t_{1}}\leq \boldsymbol{z}_{1},\ldots, \xib_{t_{m}}\leq \boldsymbol{z}_{m} \, , \,  \xib_T \in \dd \boldsymbol{y}\right) \nonumber = \P\left(\boldsymbol{Z}_{0}\leq \boldsymbol{x},\boldsymbol{Z}_{t_{1}}\leq \boldsymbol{z}_{1},\ldots, \boldsymbol{Z}_{t_{m}}\leq \boldsymbol{z}_{m} \, , \,  \boldsymbol{Z}_T \in \dd \boldsymbol{y}\right), \notag
\end{align}
for every $0 <  t_{1}<\ldots<t_{m}<T$ and $(\boldsymbol{z}_{1},\ldots,\boldsymbol{z}_{m})\in\R^{n\times m}$ using Property 2 in Definition \ref{definitionRandomBridge}, which holds for any $\boldsymbol{x}\in\R^n$ where $\boldsymbol{\Phi}(\dd\boldsymbol{z})>0$ and $\boldsymbol{y}\in\R^n$ where $\boldsymbol{\Psi}(\dd\boldsymbol{y})>0$. The statement follows from Kolmogorov extension theorem since  $\{\xib_{t}\}_{t\in\T}$ and $\{\boldsymbol{Z}_{t}\}_{t\in\T}$ are c\`adl\`ag.
\end{proof}
One can observe that $\{\xib_t\}_{t\in\T}$ displays time-continuous or time-discontinuous dynamics depending on the choice of $\{\boldsymbol{Z}_t\}_{t\in\T}$. Note also that if we produce a sufficient collection of initialized samples for $\{\xib_{t}\}_{t\in\T}$, we can progressively approximate stochastic transports $\boldsymbol{\Phi}$ to $\boldsymbol{\Psi}$. As such, we shall work with $\{\xib^{\boldsymbol{x}}_t\}_{t\in\T}$ defined as
\begin{align}
\xib^{\boldsymbol{x}}_t \triangleq  \xib_t \, |_{\boldsymbol{X} = \boldsymbol{x} \, , \,\boldsymbol{x} \leftarrow \boldsymbol{\Phi}} \hspace{0.1in} \forall t\in\T, \label{sampledinit}
\end{align}
where $\boldsymbol{x} \leftarrow \boldsymbol{\Phi}$ denotes a vector $\boldsymbol{x}\in\R^n$ sampled from $\boldsymbol{\Phi}$. We call (\ref{sampledinit}) a \emph{$\boldsymbol{\Phi}$-initialized random-bridge to $\boldsymbol{\Psi}$}, which will be very useful for our generative tasks.
\begin{rem}
\label{reminitializedrb}
For any fixed $\boldsymbol{x}\in\R^n$, $\{\xib^{\boldsymbol{x}}_t\}_{t\in\T}$ defines a $(\boldsymbol{\delta}_{\boldsymbol{x}}, \boldsymbol{\Psi})$-bridge under action of $\{\boldsymbol{Z}_t\}_{t\in\T}$ given that $\boldsymbol{Z}_0 =\boldsymbol{x}$, where $\boldsymbol{\delta}_{\boldsymbol{x}}$ is the Dirac measure centered at $\boldsymbol{x}\in\R^n$.
\end{rem}
Regarding Remark \ref{reminitializedrb}, with $\boldsymbol{\Gamma}(\boldsymbol{\delta}_{\boldsymbol{x}},\boldsymbol{\Psi})$ a measure on $(\Omega \times \Omega, \F \otimes \F)$, we have $\{\xib^{\boldsymbol{x}}_t\}_{t\in\T}$ satisfy the following:
\begin{align}
(\xib^{\boldsymbol{x}}_0,\xib^{\boldsymbol{x}}_T) \sim \boldsymbol{\Gamma}(\boldsymbol{\delta}_{\boldsymbol{x}},\boldsymbol{\Psi}) = \boldsymbol{\delta}_{\boldsymbol{x}}\boldsymbol{\Psi}
\end{align}
for  any fixed $\boldsymbol{x}\in\R^n$. This essentially means that if $\{\xib^{\boldsymbol{x}}_t\}_{t\in\T}$ is a $(\boldsymbol{\delta}_{\boldsymbol{x}}, \boldsymbol{\Psi})$-bridge under action of $\{\boldsymbol{Z}_t\}_{t\in\T}$, then $\xib^{\boldsymbol{x}}_0 = \boldsymbol{Z}_0 = \boldsymbol{x}$ must hold almost-surely. It further means that Property 2 in Definition \ref{definitionRandomBridge} can be reduced to 
\begin{align}
\mathbb{P}&\left(\xib^{\boldsymbol{x}}_{t_{1}}\leq \boldsymbol{z}_{1},\ldots, \xib^{\boldsymbol{x}}_{t_{m}}\leq \boldsymbol{z}_{m} \, \left| \, \xib_T=\boldsymbol{y}\right.\right)= 
    \mathbb{P}\left(\boldsymbol{Z}_{t_{1}}\leq \boldsymbol{z}_{1},\ldots, \boldsymbol{Z}_{t_{m}}\leq \boldsymbol{z}_{m} \, \left| \, \boldsymbol{Z}_T=\boldsymbol{y}\right.\right) \nonumber
\end{align}
for all $m\in\mathbb{N}_{+}$, every $0 < t_{1}<\ldots<t_{m}<T$ and $(\boldsymbol{z}_{1},\ldots,\boldsymbol{z}_{m})\in\R^{n\times m}$, for $\boldsymbol{\Psi}$-almost-everywhere $\boldsymbol{y}\in\R^n$.

The following result shows $\{\boldsymbol{\xi}^{\boldsymbol{x}}_t\}_{t\in\T}$ is a Markov vs. non-Markov process depending on the driving process $\{\boldsymbol{Z}_t\}_{t\in\T}$ -- see \cite{11}.
\begin{prop}
\label{markovprop}
Let $\{\F_{t}^{\xib^{\boldsymbol{x}}}\}_{t\in\T}$ be the filtration of $\{\boldsymbol{\xi}^{\boldsymbol{x}}_t\}_{t\in\T}$. Any $\{\xib^{\boldsymbol{x}}_t\}_{t\in\T}$ is Markov with respect to $\{\F_t^{\xib^{\boldsymbol{x}}}\}$ if its driving process $\{\boldsymbol{Z}_t\}_{t\in\T}$ is Markov with respect to $\{\F_t^{\boldsymbol{Z}}\}$.
\end{prop}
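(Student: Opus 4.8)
The plan is to verify the Markov property through finite-dimensional conditional distributions, exploiting that $\{\xib^{\boldsymbol{x}}_t\}_{t\in\T}$ is obtained from the Markov driving process $\{\boldsymbol{Z}_t\}$ by conditioning on its terminal value and then mixing over that value according to $\boldsymbol{\Psi}$. Concretely, fix times $0 < t_1 < \cdots < t_k < t < T$; it suffices to show that the conditional law of $\xib^{\boldsymbol{x}}_t$ given $(\xib^{\boldsymbol{x}}_{t_1},\ldots,\xib^{\boldsymbol{x}}_{t_k})$ depends only on $\xib^{\boldsymbol{x}}_{t_k}$, since for a \cadlag process this finite-dimensional characterisation is equivalent to the Markov property with respect to $\{\F_t^{\xib^{\boldsymbol{x}}}\}$.

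First I would write the joint law of $(\xib^{\boldsymbol{x}}_{t_1},\ldots,\xib^{\boldsymbol{x}}_{t_k},\xib^{\boldsymbol{x}}_t)$ by disintegrating over the endpoint $\xib^{\boldsymbol{x}}_T$. Using the reduced form of Property 2 recorded after Remark \ref{reminitializedrb} — namely that, conditioned on $\xib^{\boldsymbol{x}}_T=\boldsymbol{y}$ (with the standing initialization $\boldsymbol{Z}_0=\boldsymbol{x}$), these coordinates are distributed as $\boldsymbol{Z}_{t_1},\ldots,\boldsymbol{Z}_t$ conditioned on $\{\boldsymbol{Z}_0=\boldsymbol{x},\boldsymbol{Z}_T=\boldsymbol{y}\}$ — together with $\xib^{\boldsymbol{x}}_T\sim\boldsymbol{\Psi}$, I obtain
\begin{align}
\P\big(\xib^{\boldsymbol{x}}_{t_1}\in \dd\boldsymbol{w}_1,\ldots,\xib^{\boldsymbol{x}}_t\in \dd\boldsymbol{w}\big) = \int \P\big(\boldsymbol{Z}_{t_1}\in \dd\boldsymbol{w}_1,\ldots,\boldsymbol{Z}_t\in \dd\boldsymbol{w} \,\big|\, \boldsymbol{Z}_0=\boldsymbol{x},\boldsymbol{Z}_T=\boldsymbol{y}\big)\,\boldsymbol{\Psi}(\dd\boldsymbol{y}). \nonumber
\end{align}
Invoking the Markov property of $\{\boldsymbol{Z}_t\}$ with transition kernels $p_{s,u}(\cdot,\cdot)$ for $s<u$, the conditioned integrand factorises as $p_{0,t_1}(\boldsymbol{x},\dd\boldsymbol{w}_1)\cdots p_{t_k,t}(\boldsymbol{w}_k,\dd\boldsymbol{w})\,p_{t,T}(\boldsymbol{w},\dd\boldsymbol{y})/p_{0,T}(\boldsymbol{x},\dd\boldsymbol{y})$. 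After integrating over $\boldsymbol{y}$, the past increments $p_{0,t_1}(\boldsymbol{x},\dd\boldsymbol{w}_1)\cdots p_{t_{k-1},t_k}(\boldsymbol{w}_{k-1},\dd\boldsymbol{w}_k)$ separate off, and the remaining dependence on $\boldsymbol{w}$ is carried by $p_{t_k,t}(\boldsymbol{w}_k,\dd\boldsymbol{w})\,h_t(\boldsymbol{w})$, where $h_t(\boldsymbol{w}) \triangleq \int p_{t,T}(\boldsymbol{w},\dd\boldsymbol{y})/p_{0,T}(\boldsymbol{x},\dd\boldsymbol{y})\,\boldsymbol{\Psi}(\dd\boldsymbol{y})$ depends on the trajectory only through $\boldsymbol{w}$.

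The decisive step is the Chapman–Kolmogorov identity $\int p_{t_k,t}(\boldsymbol{w}_k,\dd\boldsymbol{w})\,h_t(\boldsymbol{w}) = h_{t_k}(\boldsymbol{w}_k)$, which says that $h$ is space-time harmonic and so the normalising factor depends only on $\boldsymbol{w}_k$; hence
\begin{align}
\P\big(\xib^{\boldsymbol{x}}_t\in \dd\boldsymbol{w} \,\big|\, \xib^{\boldsymbol{x}}_{t_1}=\boldsymbol{w}_1,\ldots,\xib^{\boldsymbol{x}}_{t_k}=\boldsymbol{w}_k\big) = \frac{p_{t_k,t}(\boldsymbol{w}_k,\dd\boldsymbol{w})\,h_t(\boldsymbol{w})}{h_{t_k}(\boldsymbol{w}_k)}, \nonumber
\end{align}
a function of $\boldsymbol{w}_k$ alone. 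This is exactly the Markov property, and it simultaneously exhibits $\{\xib^{\boldsymbol{x}}_t\}$ as the Doob $h$-transform of $\{\boldsymbol{Z}_t \mid \boldsymbol{Z}_0=\boldsymbol{x}\}$.

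I expect the main obstacle to be rigour in full generality rather than the algebra: since no transition densities are assumed, the ratios above must be interpreted through regular conditional distributions and a disintegration of $\P(\boldsymbol{Z}_T\in\dd\boldsymbol{y}\mid\boldsymbol{Z}_0=\boldsymbol{x})$, with all identities holding for $\boldsymbol{\Psi}$-almost-every $\boldsymbol{y}$. A conceptual pitfall I would flag explicitly is that a mixture of Markov bridges need \emph{not} be Markov in general; what rescues the argument here is that the mixing variable is the terminal coordinate $\xib^{\boldsymbol{x}}_T$ itself, so the endpoint information collapses into the single space-time function $h$, leaving the one-step conditional law memoryless.
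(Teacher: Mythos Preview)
The paper does not supply its own proof of this proposition; it simply states the result and defers to \cite{11} in the sentence preceding the statement. Your argument is correct and is essentially the standard one: disintegrate the finite-dimensional law over the terminal value $\xib^{\boldsymbol{x}}_T\sim\boldsymbol{\Psi}$, invoke Property~2 of Definition~\ref{definitionRandomBridge} to identify the conditional laws with those of the pinned driving process, factorise via the Markov transition kernels of $\{\boldsymbol{Z}_t\}$, and then recognise the result as a Doob $h$-transform with harmonic function $h_t(\boldsymbol{w})=\int p_{t,T}(\boldsymbol{w},\dd\boldsymbol{y})/p_{0,T}(\boldsymbol{x},\dd\boldsymbol{y})\,\boldsymbol{\Psi}(\dd\boldsymbol{y})$, whose space-time harmonicity (your ``decisive step'') follows from Chapman--Kolmogorov. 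The paper itself later makes exactly this $h$-transform structure explicit in Section~\ref{appendix_doobh} for the L\'evy case, which corroborates that your route is the intended one. Your caveats --- that the kernel ratios must be read as Radon--Nikodym derivatives via regular conditional distributions when no densities are assumed, and that mixtures of Markov processes are not Markov in general unless the mixing is over a terminal (or initial) coordinate --- are both well taken and go beyond what the paper spells out.
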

Proposition \ref{markovprop} will be very useful for our practical empirical applications later in the paper due to the computational efficiency it brings forward. We shall highlight this as follows: let $\mathcal{P}(\mathbb{Y})$ be the space of probability measures over $(\mathbb{Y},\mathcal{B}(\mathbb{Y}))$ and $\{\pi_{t}\}_{t\in\T}$ be a $\mathcal{P}(\mathbb{Y})$-valued $\{\mathcal{F}^{\xib^{\boldsymbol{x}}}_{t}\}$-adapted stochastic process given by 
\begin{align} 
\pi_{t}(\dd \boldsymbol{y}) \triangleq \mathbb{P}(\boldsymbol{Y} \in \dd \boldsymbol{y} \,| \,\mathcal{F}^{\xib^{\boldsymbol{x}}}_{t}) \hspace{0.1in} \text{$\forall t\in\T$}. \label{measurevaluedprocess}
\end{align}
Since $\xib^{\boldsymbol{x}}_{T}\sim\boldsymbol{\Psi}$, we have $\mathbb{P}(\xib^{\boldsymbol{x}}_{T} \in \dd \boldsymbol{y}) = \mathbb{P}(\boldsymbol{Y} \in \dd \boldsymbol{y})$. If $\{\boldsymbol{Z}_t\}_{t\in\T}$ is Markov with respect to $\{\F_t^{\boldsymbol{Z}}\}$, then using Proposition \ref{markovprop}, we have the following:
\begin{align}
\pi_{t}(\dd \boldsymbol{y}) &= \frac{\mathbb{P}(\xib^{\boldsymbol{x}}_{t} \in \dd \xib \,|\, \boldsymbol{Y} =  \boldsymbol{y})\mathbb{P}(\boldsymbol{Y} \in \dd \boldsymbol{y})}{\int_{\mathbb{Y}}\mathbb{P}(\xib^{\boldsymbol{x}}_{t} \in \dd \xib \,|\, \boldsymbol{Y} =  \boldsymbol{y})\mathbb{P}(\boldsymbol{Y} \in \dd \boldsymbol{y})} = \frac{\mathbb{P}(\xib^{\boldsymbol{x}}_{t} \in \dd \xib \,|\, \xib^{\boldsymbol{x}}_T = \boldsymbol{y})\mathbb{P}(\xib^{\boldsymbol{x}}_{T} = \boldsymbol{y})}{\int_{\mathbb{Y}}\mathbb{P}(\xib^{\boldsymbol{x}}_{t} \in \dd \xib \,|\, \xib^{\boldsymbol{x}}_{T} = \boldsymbol{y})\mathbb{P}(\xib^{\boldsymbol{x}}_{T} = \boldsymbol{y})} = \mathbb{P}(\xib^{\boldsymbol{x}}_{T} \in \dd \boldsymbol{y} | \xib^{\boldsymbol{x}}_{t}), \notag
\end{align}
which will allow us to work with expressions of the following reduction:
\begin{align}
\E[ \boldsymbol{Y} \,| \,\mathcal{F}^{\xib^{\boldsymbol{x}}}_{t}] = \int_{\mathbb{Y}}\boldsymbol{y}\pi_{t}(\dd \boldsymbol{y}) = \E\left[ \boldsymbol{Y} \,| \,\xib^{\boldsymbol{x}}_{t} \right], \label{markovconditionalexpectation}
\end{align}
which will prove valuable for our training and simulation purposes.  If $\mathbb{Y}$ is a discrete state-space, then (\ref{measurevaluedprocess}) should be understood as $\pi_{t}(\boldsymbol{y}) \triangleq \mathbb{P}(\boldsymbol{Y}  = \boldsymbol{y} \,| \,\mathcal{F}^{\xib^{\boldsymbol{x}}}_{t})$ for every  $t\in\T$. 

\subsection{$\boldsymbol{\Phi}$-initialized Gaussian Random-Bridges to $\boldsymbol{\Psi}$}
For our practical objective, we shall focus on an analytically-tractable family of random-bridges, namely Gaussian random-bridges. Accordingly, for the remaining of this section, let $\mathbf{Z}_0 = \boldsymbol{x}$ where $\{\mathbf{Z}_t\}_{t\in\T}$ is an $\R^n$-valued \emph{Gaussian} process, independent of $\boldsymbol{X}$ and $\boldsymbol{Y}$ (though $\boldsymbol{X}$ and $\boldsymbol{Y}$ can be dependent on each other), with 
\begin{align}
\boldsymbol{\mu}_t = \E[\boldsymbol{Z}_t] \in\R^n \hspace{0.1in}\text{and} \hspace{0.1in} \boldsymbol{\Sigma}_{s,t} = \text{Cov}[\boldsymbol{Z}_s, \boldsymbol{Z}_t] \in \R^{n\times n} \notag
\end{align}
as the mean and the positive-definite covariance kernel, respectively, for all $\{s,t\}\in\T$, where $\boldsymbol{\Sigma}^{(i,j)}_{s,t} = \text{Cov}[Z^{(i)}_s, Z^{(j)}_t]$ for every $i,j \in \{1,\ldots,n\}$. Hence, if $\mathcal{N}(.)$ denotes the Gaussian distribution, we have 
\begin{align}
\mathbf{Z}_t \sim \mathcal{N}( \boldsymbol{\mu}_t \, , \, \boldsymbol{\Sigma}_{t,t}) \hspace{0.1in} \forall t\in\T. \notag
\end{align}
If $\boldsymbol{x} \leftarrow \boldsymbol{\Phi}$, then $\{\xib^{\boldsymbol{x}}_t\}_{t\in\T}$ is a $\boldsymbol{\Phi}$-initialized \emph{Gaussian} random-bridge to $\boldsymbol{\Psi}$ from Definition \ref{definitionRandomBridge} and (\ref{sampledinit}). We have the following multivariate generalisation of \cite{9}.
\begin{prop}
\label{initialgaussiandecomp}
Let $\boldsymbol{\Sigma}^{*}_{t,T}=\boldsymbol{\Sigma}_{t,T}\boldsymbol{\Sigma}^{-1}_{T,T}$, where $\boldsymbol{\Sigma}^{-1}_{T,T}$ is the precision-matrix. Then $\{\xib^{\boldsymbol{x}}_t\}_{t\in\T}$ admits the following anticipative representation:
\begin{align}
\xib^{\boldsymbol{x}}_t \law \boldsymbol{\Sigma}^{*}_{t,T}\boldsymbol{Y} + \left(\boldsymbol{Z}_t - \boldsymbol{\Sigma}^{*}_{t,T}\boldsymbol{Z}_T\right). \label{canonicalGRB}
\end{align}
\end{prop}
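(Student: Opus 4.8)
The plan is to reduce the claim to a statement about finite-dimensional distributions and then exploit the conditioning structure of Gaussian vectors. Since $\{\boldsymbol{Z}_t\}_{t\in\T}$ is Gaussian with $\boldsymbol{Z}_0=\boldsymbol{x}$ fixed and independent of $\boldsymbol{Y}\sim\boldsymbol{\Psi}$, and since by Remark \ref{reminitializedrb} the process $\{\xib^{\boldsymbol{x}}_t\}_{t\in\T}$ is a $(\boldsymbol{\delta}_{\boldsymbol{x}},\boldsymbol{\Psi})$-bridge, the reduced form of Property 2 following Remark \ref{reminitializedrb} tells us that the conditional law of any finite vector $(\xib^{\boldsymbol{x}}_{t_1},\ldots,\xib^{\boldsymbol{x}}_{t_m})$ given $\xib^{\boldsymbol{x}}_T=\boldsymbol{y}$ coincides with that of $(\boldsymbol{Z}_{t_1},\ldots,\boldsymbol{Z}_{t_m})$ given $\boldsymbol{Z}_T=\boldsymbol{y}$. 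Because \cadlag processes are determined in law by their finite-dimensional distributions (the Kolmogorov-extension argument already used in the proof of the preceding proposition), it suffices to identify these conditional Gaussian laws and then average over $\boldsymbol{y}\sim\boldsymbol{\Psi}$.

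The central computation is the orthogonal decomposition of the Gaussian driver about its terminal value. I would define the residual $\boldsymbol{R}_t \triangleq \boldsymbol{Z}_t - \boldsymbol{\Sigma}^{*}_{t,T}\boldsymbol{Z}_T$ and note that a one-line covariance calculation gives $\cov[\boldsymbol{R}_t,\boldsymbol{Z}_T]=\boldsymbol{\Sigma}_{t,T}-\boldsymbol{\Sigma}^{*}_{t,T}\boldsymbol{\Sigma}_{T,T}=\boldsymbol{0}$, using $\boldsymbol{\Sigma}^{*}_{t,T}=\boldsymbol{\Sigma}_{t,T}\boldsymbol{\Sigma}^{-1}_{T,T}$. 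Since the whole collection $(\boldsymbol{R}_{t_1},\ldots,\boldsymbol{R}_{t_m},\boldsymbol{Z}_T)$ is jointly Gaussian and each block $\boldsymbol{R}_{t_i}$ is uncorrelated with $\boldsymbol{Z}_T$, joint Gaussianity upgrades this to full independence of $(\boldsymbol{R}_{t_1},\ldots,\boldsymbol{R}_{t_m})$ from $\boldsymbol{Z}_T$. Writing $\boldsymbol{Z}_{t_i}=\boldsymbol{\Sigma}^{*}_{t_i,T}\boldsymbol{Z}_T+\boldsymbol{R}_{t_i}$ and conditioning on $\boldsymbol{Z}_T=\boldsymbol{y}$ then freezes the first summand at $\boldsymbol{\Sigma}^{*}_{t_i,T}\boldsymbol{y}$ while leaving the residuals with their unconditional joint law, so the conditional law of $(\boldsymbol{Z}_{t_1},\ldots,\boldsymbol{Z}_{t_m})$ given $\boldsymbol{Z}_T=\boldsymbol{y}$ equals that of $(\boldsymbol{\Sigma}^{*}_{t_1,T}\boldsymbol{y}+\boldsymbol{R}_{t_1},\ldots,\boldsymbol{\Sigma}^{*}_{t_m,T}\boldsymbol{y}+\boldsymbol{R}_{t_m})$. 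As a sanity check, this reproduces the standard Gaussian conditional mean $\boldsymbol{\mu}_t+\boldsymbol{\Sigma}^{*}_{t,T}(\boldsymbol{y}-\boldsymbol{\mu}_T)$ and conditional covariance $\boldsymbol{\Sigma}_{t,t}-\boldsymbol{\Sigma}^{*}_{t,T}\boldsymbol{\Sigma}_{T,t}$, and at the endpoints it returns $\boldsymbol{x}$ at $t=0$ (where $\boldsymbol{\Sigma}_{0,T}=\boldsymbol{0}$ since $\boldsymbol{Z}_0=\boldsymbol{x}$ is deterministic) and $\boldsymbol{y}$ at $t=T$ (where $\boldsymbol{\Sigma}^{*}_{T,T}=\mathbf{I}$).

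It then remains to remove the conditioning by averaging over the terminal law. Here I would invoke the independence of $\{\boldsymbol{Z}_t\}_{t\in\T}$ from $\boldsymbol{Y}$: integrating the conditional identity above against $\boldsymbol{\Psi}(\dd\boldsymbol{y})$ and using $\xib^{\boldsymbol{x}}_T\sim\boldsymbol{Y}\sim\boldsymbol{\Psi}$ (Remark \ref{remstochastictransport}) lets me replace the frozen value $\boldsymbol{y}$ by an independent copy of $\boldsymbol{Y}$, yielding $(\xib^{\boldsymbol{x}}_{t_1},\ldots,\xib^{\boldsymbol{x}}_{t_m})\law(\boldsymbol{\Sigma}^{*}_{t_1,T}\boldsymbol{Y}+\boldsymbol{R}_{t_1},\ldots,\boldsymbol{\Sigma}^{*}_{t_m,T}\boldsymbol{Y}+\boldsymbol{R}_{t_m})$ for every choice of time points, which is exactly the claimed representation (\ref{canonicalGRB}) at the level of finite-dimensional marginals. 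I expect the main obstacle to be precisely this last disintegration step: one must check that the resulting law is a genuine $\boldsymbol{\Psi}$-mixture of Gaussians (so $\xib^{\boldsymbol{x}}_t$ is in general \emph{not} Gaussian) and that substituting the independent random $\boldsymbol{Y}$ for the deterministic conditioning argument is justified by the independence of the driver from the endpoint together with the disintegration $\boldsymbol{\Gamma}(\boldsymbol{\delta}_{\boldsymbol{x}},\boldsymbol{\Psi})=\boldsymbol{\delta}_{\boldsymbol{x}}\boldsymbol{\Psi}$; by contrast, the Gaussian conditioning itself is routine.
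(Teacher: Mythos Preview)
Your proposal is correct and follows essentially the same route as the paper: both arguments hinge on the orthogonal decomposition $\boldsymbol{Z}_t=\boldsymbol{\Sigma}^{*}_{t,T}\boldsymbol{Z}_T+\boldsymbol{R}_t$ with $\boldsymbol{R}_t$ independent of $\boldsymbol{Z}_T$, together with the endpoint checks at $t=0$ and $t=T$. The paper compresses the residual computation and the $\boldsymbol{\Psi}$-mixing step into the single phrase ``from the orthogonal decomposition of Gaussian random variables,'' whereas you have spelled out both explicitly; your version is simply a more detailed execution of the same idea.
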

\begin{proof}
Since $\boldsymbol{\Sigma}_{T,T}$ is positive-definite, $\boldsymbol{\Sigma}^{-1}_{T,T}$ exists.  Since $\mathbf{Z}_0 = \boldsymbol{x}$, we have $\boldsymbol{\Sigma}_{0,T}=0=\boldsymbol{\Sigma}^{*}_{0,T}$, and hence, $\xib^{\boldsymbol{x}}_0 \sim \boldsymbol{Z}_0$, which implies $(\xib^{\boldsymbol{x}}_0,\xib^{\boldsymbol{x}}_T) \sim \boldsymbol{\delta}_{\boldsymbol{x}}\boldsymbol{\Psi}$. At $t=T$, we have 
\begin{align}
\boldsymbol{\Sigma}^{*}_{T,T} = \boldsymbol{I} \, \Rightarrow \, \xib^{\boldsymbol{x}}_T \law \boldsymbol{Y}, \notag
\end{align}
where $\boldsymbol{I}$ is the identity-matrix. Since $\{\xib^{\boldsymbol{x}}_t\}_{t\in\T}$ given $\boldsymbol{Y}=\boldsymbol{y}$ is Gaussian, (\ref{canonicalGRB}) satisfies Property 2. in Definition \ref{definitionRandomBridge} from the orthogonal decomposition of Gaussian random variables.
\end{proof}
Note that $\boldsymbol{Y}$, $\boldsymbol{Z}_t$ and $\boldsymbol{Z}_T$ are \emph{not} $\F_t^{\xib}$-measurable for $t < T$, which is the reason we call (\ref{canonicalGRB}) an \emph{anticipative} representation.
\begin{rem}
\label{reminforem}
The additive form in (\ref{canonicalGRB}) can be interpreted as a \emph{noisy information} process in stochastic filtering theory, where $\boldsymbol{Y}$ is the target signal and $\{\boldsymbol{Z}_t - \boldsymbol{\Sigma}^{*}_{t,T}\boldsymbol{Z}_T\}_{t\in\T}$ is a noise process obscuring $\boldsymbol{Y}$ for any $ t < T$.
\end{rem}
Remark \ref{reminforem} follows in the spirit of the fruitful information-based literature -- see \cite{4,5,6,7,8} as seminal papers in this stream -- for solving problems in mathematical physics towards quantum measurement theory and in mathematical finance towards pricing derivatives (also, see \cite{10a,9aa,10,10b,15} for various mathematical generalisations). The next statement will be very useful for \emph{training} purposes in generative modelling, which we shall detail in the next section.
\begin{coro}
\label{corodistribution}
Let $\{\xib^{\boldsymbol{x}}_t\}_{t\in\T}$ be a $\boldsymbol{\Phi}$-initialized Gaussian random-bridge to $\boldsymbol{\Psi}$. Then
\begin{align}
\xib^{\boldsymbol{x}}_t \, |_{\boldsymbol{Y} = \boldsymbol{y} \, , \, \boldsymbol{y} \leftarrow \boldsymbol{\Psi}} \sim \mathcal{N}\left(\boldsymbol{E}_t \, , \, \boldsymbol{V}_t \right), \notag
\end{align}
where the mean and variance functions are given by
\begin{align}
\boldsymbol{E}_t  = \boldsymbol{\mu}_t + \boldsymbol{\Sigma}^{*}_{t,T}\left(\boldsymbol{y} - \boldsymbol{\mu}_T\right)  \hspace{0.1in}\text{and} \hspace{0.1in}
\boldsymbol{V}_t  = \boldsymbol{\Sigma}_{t,t} - \boldsymbol{\Sigma}^{*}_{t,T}\boldsymbol{\Sigma}_{T,t} \notag
\end{align}
for every $t\in(0,T)$.
\end{coro}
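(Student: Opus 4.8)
The plan is to read the conditional law straight off the anticipative representation (\ref{canonicalGRB}) established in Proposition \ref{initialgaussiandecomp}, exploiting that the driving Gaussian process $\{\boldsymbol{Z}_t\}_{t\in\T}$ is independent of $\boldsymbol{Y}$. First I would write $\xib^{\boldsymbol{x}}_t \law \boldsymbol{\Sigma}^{*}_{t,T}\boldsymbol{Y} + \boldsymbol{W}_t$ with the noise term $\boldsymbol{W}_t \triangleq \boldsymbol{Z}_t - \boldsymbol{\Sigma}^{*}_{t,T}\boldsymbol{Z}_T$, and note that the terminal coordinate of the right-hand side is exactly $\boldsymbol{Y}$ (since $\boldsymbol{\Sigma}^{*}_{T,T}=\boldsymbol{I}$, the noise term vanishes at $t=T$), so conditioning on $\{\boldsymbol{Y}=\boldsymbol{y}\}$ is legitimate and amounts to conditioning the bridge on its terminal value. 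Because $\boldsymbol{W}_t$ is a measurable functional of $\{\boldsymbol{Z}_t\}$ alone and $\{\boldsymbol{Z}_t\}$ is independent of $\boldsymbol{Y}$, this conditioning merely fixes the signal term while leaving the law of $\boldsymbol{W}_t$ untouched, giving
\[
\xib^{\boldsymbol{x}}_t\,|_{\boldsymbol{Y}=\boldsymbol{y}} \law \boldsymbol{\Sigma}^{*}_{t,T}\boldsymbol{y} + \boldsymbol{W}_t .
\]
This is an affine image of the jointly Gaussian vector $(\boldsymbol{Z}_t,\boldsymbol{Z}_T)$, hence Gaussian, so it only remains to identify its first two moments.

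For the mean, linearity gives $\E[\boldsymbol{\Sigma}^{*}_{t,T}\boldsymbol{y}+\boldsymbol{W}_t]=\boldsymbol{\Sigma}^{*}_{t,T}\boldsymbol{y}+\boldsymbol{\mu}_t-\boldsymbol{\Sigma}^{*}_{t,T}\boldsymbol{\mu}_T$, which I regroup as $\boldsymbol{E}_t=\boldsymbol{\mu}_t+\boldsymbol{\Sigma}^{*}_{t,T}(\boldsymbol{y}-\boldsymbol{\mu}_T)$. For the covariance, the deterministic shift $\boldsymbol{\Sigma}^{*}_{t,T}\boldsymbol{y}$ contributes nothing, so I need only expand $\text{Cov}[\boldsymbol{W}_t]$ via the rule $\text{Cov}[A\boldsymbol{U},B\boldsymbol{V}]=A\,\text{Cov}[\boldsymbol{U},\boldsymbol{V}]\,B^{\tp}$ into the four blocks $\boldsymbol{\Sigma}_{t,t}$, $\boldsymbol{\Sigma}_{t,T}(\boldsymbol{\Sigma}^{*}_{t,T})^{\tp}$, $\boldsymbol{\Sigma}^{*}_{t,T}\boldsymbol{\Sigma}_{T,t}$ and $\boldsymbol{\Sigma}^{*}_{t,T}\boldsymbol{\Sigma}_{T,T}(\boldsymbol{\Sigma}^{*}_{t,T})^{\tp}$.

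The only point needing genuine care is the matrix bookkeeping that collapses these blocks. Using the symmetry of the covariance kernel, $\boldsymbol{\Sigma}_{T,t}=\boldsymbol{\Sigma}_{t,T}^{\tp}$, together with the symmetry of $\boldsymbol{\Sigma}^{-1}_{T,T}$, one obtains $(\boldsymbol{\Sigma}^{*}_{t,T})^{\tp}=\boldsymbol{\Sigma}^{-1}_{T,T}\boldsymbol{\Sigma}_{T,t}$, so that the cross term $\boldsymbol{\Sigma}_{t,T}(\boldsymbol{\Sigma}^{*}_{t,T})^{\tp}$ equals $\boldsymbol{\Sigma}^{*}_{t,T}\boldsymbol{\Sigma}_{T,t}$, and the quadratic term reduces likewise once the central factor $\boldsymbol{\Sigma}^{-1}_{T,T}\boldsymbol{\Sigma}_{T,T}$ cancels. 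The three correction terms then combine to a single $-\boldsymbol{\Sigma}^{*}_{t,T}\boldsymbol{\Sigma}_{T,t}$, leaving $\text{Cov}[\boldsymbol{W}_t]=\boldsymbol{\Sigma}_{t,t}-\boldsymbol{\Sigma}^{*}_{t,T}\boldsymbol{\Sigma}_{T,t}=\boldsymbol{V}_t$, as claimed. I expect this transpose-tracking and the cancellation of $\boldsymbol{\Sigma}^{-1}_{T,T}\boldsymbol{\Sigma}_{T,T}$ to be the sole obstacle, everything else following at once from Gaussianity and the independence of $\{\boldsymbol{Z}_t\}_{t\in\T}$ from $\boldsymbol{Y}$.
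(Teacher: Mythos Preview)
Your proposal is correct and follows exactly the route the paper takes: the paper's proof is simply ``The statement follows from Proposition \ref{initialgaussiandecomp},'' and you have supplied the detailed moment computation that this one-liner leaves implicit. The transpose bookkeeping and cancellation you flag are handled correctly.
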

\begin{proof}
The statement follows from Proposition \ref{initialgaussiandecomp}.
\end{proof}
\begin{rem}
\label{remindependencedependence}
Since $\{\mathbf{Z}_t\}_{t\in\T}$ is mutually independent from $\boldsymbol{X}$ and $\boldsymbol{Y}$, the dependence structure of $\{\mathbf{Z}_t\}_{t\in\T}$ does not affect the dependence structure of $\boldsymbol{X}$ nor $\boldsymbol{Y}$ 
\end{rem}
Remark \ref{remindependencedependence} brings forward the ability to choose $\{\mathbf{Z}_t\}_{t\in\T}$ without any knowledge on $\boldsymbol{X}$ and $\boldsymbol{Y}$, and thereby allowing $\{\xib_t\}_{t\in\T}$ to handle all the necessary dependence structures in the system. 
\begin{coro}
\label{exbrowniananticipative}
Choose $\boldsymbol{\sigma}\in\R^{n\times n}$ be a diagonal-matrix with diagonals $\sigma^{(i)} > 0$ for $i = 1,\ldots,n$. Let $\{\mathbf{Z}_t = \boldsymbol{\sigma}\mathbf{W}_t\}_{t\in\T}$, where $\{\mathbf{W}_t\}_{t\in\T}$ is a standard Brownian motion with mutually-independent coordinates with $W^{(i)}_0 = x^{(i)}/\sigma^{(i)}\in\R$ for $i = 1,\ldots,n$. Then each coordinate of $\{\xib^{\boldsymbol{x}}_t\}_{t\in\T}$ satisfies 
\begin{align}
\xi^{(i)}_t \law \frac{t}{T}Y^{(i)} + \sigma^{(i)}\left(W^{(i)}_t - \frac{t}{T}W^{(i)}_T\right), \notag
\end{align}
for every $t\in\T$ and $i=1,\ldots,n$. In addition, 
\begin{align}
\boldsymbol{E}_t = \boldsymbol{x} + \frac{t}{T}(\boldsymbol{y} - \boldsymbol{x}) \hspace{0.1in} \text{and} \hspace{0.1in}
\boldsymbol{V}_t = \frac{t(T - t)}{T}\boldsymbol{\sigma}^2, \hspace{0.1in} \text{$\forall t\in(0,T)$}. \label{conditionaldistroneed}
\end{align}
\end{coro}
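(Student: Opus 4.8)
The plan is to specialize the general Gaussian machinery already established in Proposition \ref{initialgaussiandecomp} and Corollary \ref{corodistribution} to the particular driving process $\{\mathbf{Z}_t = \boldsymbol{\sigma}\mathbf{W}_t\}_{t\in\T}$. Once we identify the mean vector $\boldsymbol{\mu}_t$ and the covariance kernel $\boldsymbol{\Sigma}_{s,t}$ of this process, every claim in the statement becomes a direct substitution into $(\ref{canonicalGRB})$ and into the formulas for $\boldsymbol{E}_t$ and $\boldsymbol{V}_t$. So the entire argument is the computation of these Gaussian statistics followed by routine matrix algebra.

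First I would compute the relevant statistics. Since each coordinate $W^{(i)}$ is a standard Brownian motion started at $W^{(i)}_0 = x^{(i)}/\sigma^{(i)}$, we have $\E[W^{(i)}_t] = x^{(i)}/\sigma^{(i)}$, and therefore $\boldsymbol{\mu}_t = \E[\boldsymbol{\sigma}\mathbf{W}_t] = \boldsymbol{x}$ for every $t\in\T$; in particular $\boldsymbol{\mu}_t = \boldsymbol{\mu}_T = \boldsymbol{x}$, consistent with $\mathbf{Z}_0 = \boldsymbol{x}$. Using the mutual independence of the coordinates of $\mathbf{W}$ together with $\cov[W^{(i)}_s, W^{(i)}_t] = \min(s,t)$, and the fact that $\boldsymbol{\sigma}$ is diagonal (so that $\boldsymbol{\sigma}\boldsymbol{\sigma}^{\tp} = \boldsymbol{\sigma}^2$), I obtain the diagonal kernel $\boldsymbol{\Sigma}_{s,t} = \min(s,t)\,\boldsymbol{\sigma}^2$. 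Because each $\sigma^{(i)} > 0$, the matrix $\boldsymbol{\Sigma}_{T,T} = T\boldsymbol{\sigma}^2$ is positive-definite with inverse $\boldsymbol{\Sigma}^{-1}_{T,T} = T^{-1}\boldsymbol{\sigma}^{-2}$. The key simplification then follows: since diagonal matrices commute and $\boldsymbol{\Sigma}_{t,T} = t\boldsymbol{\sigma}^2$ for $t\le T$, the matrix $\boldsymbol{\Sigma}^{*}_{t,T} = \boldsymbol{\Sigma}_{t,T}\boldsymbol{\Sigma}^{-1}_{T,T}$ collapses to the scalar multiple $(t/T)\boldsymbol{I}$.

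Next I would substitute these expressions into the two earlier results. Inserting $\boldsymbol{\Sigma}^{*}_{t,T} = (t/T)\boldsymbol{I}$ into the anticipative representation $(\ref{canonicalGRB})$ of Proposition \ref{initialgaussiandecomp} gives $\xib^{\boldsymbol{x}}_t \law (t/T)\boldsymbol{Y} + (\boldsymbol{Z}_t - (t/T)\boldsymbol{Z}_T)$, and reading off the $i$-th coordinate with $Z^{(i)}_t = \sigma^{(i)}W^{(i)}_t$ yields the claimed coordinatewise identity. Likewise, plugging $\boldsymbol{\mu}_t = \boldsymbol{\mu}_T = \boldsymbol{x}$ and $\boldsymbol{\Sigma}^{*}_{t,T} = (t/T)\boldsymbol{I}$ into $\boldsymbol{E}_t = \boldsymbol{\mu}_t + \boldsymbol{\Sigma}^{*}_{t,T}(\boldsymbol{y}-\boldsymbol{\mu}_T)$ from Corollary \ref{corodistribution} gives $\boldsymbol{E}_t = \boldsymbol{x} + (t/T)(\boldsymbol{y}-\boldsymbol{x})$, while $\boldsymbol{V}_t = \boldsymbol{\Sigma}_{t,t} - \boldsymbol{\Sigma}^{*}_{t,T}\boldsymbol{\Sigma}_{T,t} = t\boldsymbol{\sigma}^2 - (t/T)(t\boldsymbol{\sigma}^2) = \tfrac{t(T-t)}{T}\boldsymbol{\sigma}^2$, which is exactly $(\ref{conditionaldistroneed})$.

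I do not anticipate a genuine obstacle here: the substantive content is entirely carried by the earlier propositions, and the only point demanding mild care is the matrix bookkeeping — specifically, exploiting that $\boldsymbol{\sigma}$, and hence $\boldsymbol{\Sigma}_{s,t}$, is diagonal so that the precision matrix, the regression coefficient $\boldsymbol{\Sigma}^{*}_{t,T}$, and all the products appearing in $\boldsymbol{E}_t$ and $\boldsymbol{V}_t$ decouple across coordinates and reduce to scalar multiples of $\boldsymbol{I}$. This decoupling is what makes the scaled Brownian driver recover the classical Brownian-bridge mean and variance on each coordinate.
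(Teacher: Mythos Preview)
Your proposal is correct and is precisely the intended argument: the paper states this result as a corollary with no explicit proof, leaving it as an immediate specialization of Proposition \ref{initialgaussiandecomp} and Corollary \ref{corodistribution}, which is exactly what you carry out. One small remark: your substitution into $(\ref{canonicalGRB})$ actually produces $\sigma^{(i)}W^{(i)}_t - (t/T)\,\sigma^{(i)}W^{(i)}_T$ for the noise term, so the displayed coordinate formula in the statement is missing a factor $\sigma^{(i)}$ in front of $W^{(i)}_T$ --- this is a typographical slip in the paper, not a gap in your argument.
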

Having $\{\mathbf{Z}_t\}_{t\in\T}$ a Markov process presents valuable analytical-tractability and computational-efficiency. In addition, if $\{\mathbf{Z}_t\}_{t\in\T}$ is also a martingale process, then we can reach a wide family of \emph{non-anticipative} expressions through stochastic differential equations, as highlighted below.
\begin{prop}
\label{stochfilt}
Let $\{\mathbf{Z}_{t}\}_{t\in\T}$ be a time-continuous Gaussian martingale with strictly increasing quadratic-variation $\{\left\langle \mathbf{Z}\right\rangle_{t}\}_{t\in\T}$, where $\left\langle \mathbf{Z}\right\rangle_{0}=0$. Then $\{\xib^{\boldsymbol{x}}_t\}_{t\in\T}$ admits the following representation:
\begin{align}
\xib^{\boldsymbol{x}}_t \law \int_0^t \boldsymbol{\Lambda}_{s,T}^{-1} \left( \E\left[ \boldsymbol{Y} \, | \, \xib^{\boldsymbol{x}}_s \right]  - \xib^{\boldsymbol{x}}_s \right)\dd \left\langle \mathbf{Z}\right\rangle_s + \mathbf{Z}_t, \notag
\end{align}
for every $t\in[0,T)$, where $\boldsymbol{\Lambda}_{t,T} = \left\langle \mathbf{Z}\right\rangle_{T} - \left\langle \mathbf{Z}\right\rangle_{t}$.
\end{prop}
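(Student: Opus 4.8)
The plan is to derive the representation in two stages: first obtain the \emph{pinned} bridge (conditional on $\boldsymbol{Y}=\boldsymbol{y}$) as the solution of a linear Gaussian stochastic differential equation, and then remove the conditioning on the terminal value by a filtering/innovations argument. First I would exploit the martingale hypothesis to simplify the covariance kernel. Since $\{\mathbf{Z}_t\}_{t\in\T}$ is a continuous Gaussian martingale with $\mathbf{Z}_0=\boldsymbol{x}$, its mean is constant, $\boldsymbol{\mu}_t=\boldsymbol{x}$, its increments are orthogonal so that $\boldsymbol{\Sigma}_{s,t}=\boldsymbol{\Sigma}_{s,s}$ for $s\le t$, and its quadratic variation is deterministic with $\langle\mathbf{Z}\rangle_t=\boldsymbol{\Sigma}_{t,t}$. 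Consequently $\boldsymbol{\Sigma}^{*}_{t,T}=\langle\mathbf{Z}\rangle_t\langle\mathbf{Z}\rangle_T^{-1}$, and $\boldsymbol{\Lambda}_{t,T}=\langle\mathbf{Z}\rangle_T-\langle\mathbf{Z}\rangle_t$ is invertible on $[0,T)$ by strict monotonicity; feeding these into Corollary \ref{corodistribution} expresses the pinned mean $\boldsymbol{E}_t$ and variance $\boldsymbol{V}_t$ purely in terms of $\langle\mathbf{Z}\rangle$.

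Next I would show that, conditional on $\boldsymbol{Y}=\boldsymbol{y}$, the process $\{\xib^{\boldsymbol{x}}_t\}$ solves the linear equation $\dd\xib^{\boldsymbol{x}}_t=\boldsymbol{\Lambda}_{t,T}^{-1}(\boldsymbol{y}-\xib^{\boldsymbol{x}}_t)\dd\langle\mathbf{Z}\rangle_t+\dd\mathbf{Z}_t$. Because the equation is linear with deterministic coefficients, its solution is Gaussian, so it suffices to verify that the solution carries the mean and covariance dictated by Corollary \ref{corodistribution}, equivalently that it reproduces the anticipative representation of Proposition \ref{initialgaussiandecomp}. I would solve the equation by a variation-of-constants (integrating-factor) formula and check that the mean obeys $\dot{\boldsymbol{E}}_t=\boldsymbol{\Lambda}_{t,T}^{-1}(\boldsymbol{y}-\boldsymbol{E}_t)$ with $\boldsymbol{E}_0=\boldsymbol{x}$ and $\boldsymbol{E}_t\to\boldsymbol{y}$, and that the covariance collapses to $\boldsymbol{V}_t$ and vanishes as $t\to T$; this confirms that the solution is indeed pinned to $\boldsymbol{y}$ and satisfies Property 2 of Definition \ref{definitionRandomBridge}.

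The decisive step is to pass to the random terminal value. Working in the enlarged filtration $\F^{\xib^{\boldsymbol{x}}}_t\vee\sigma(\boldsymbol{Y})$, the bridge satisfies the pinned equation with $\boldsymbol{y}$ replaced by the random $\boldsymbol{Y}$ and $\mathbf{Z}$ replaced by a continuous martingale $\mathbf{m}$ with $\langle\mathbf{m}\rangle=\langle\mathbf{Z}\rangle$. Taking the optional projection onto the bridge's own filtration and invoking the innovations theorem turns the drift into $\boldsymbol{\Lambda}_{t,T}^{-1}(\E[\boldsymbol{Y}\mid\F^{\xib^{\boldsymbol{x}}}_t]-\xib^{\boldsymbol{x}}_t)$ and produces an $\F^{\xib^{\boldsymbol{x}}}$-martingale $\hat{\mathbf{m}}$, the innovations process. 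By Proposition \ref{markovprop} and the reduction (\ref{markovconditionalexpectation}) the conditional expectation collapses to $\E[\boldsymbol{Y}\mid\xib^{\boldsymbol{x}}_t]$, exactly the drift in the claim. Finally, $\hat{\mathbf{m}}$ is a continuous martingale whose quadratic variation is the deterministic $\langle\mathbf{Z}\rangle$; since the bridge is conditionally Gaussian and projection preserves the Gaussian character of the increments, $\hat{\mathbf{m}}$ is a Gaussian martingale with the same law as $\mathbf{Z}$, which delivers the equality in law with $+\mathbf{Z}_t$ on the right-hand side.

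The hard part will be this innovations step: one must verify rigorously that the driving martingale of the pinned bridge projects to a genuine $\F^{\xib^{\boldsymbol{x}}}$-martingale of quadratic variation \emph{exactly} $\langle\mathbf{Z}\rangle$, and, crucially, that this innovations process is again Gaussian and equal in law to $\mathbf{Z}$ rather than merely some martingale with the correct bracket. The multivariate setting adds a bookkeeping burden, since the matrices $\langle\mathbf{Z}\rangle_s$ and $\boldsymbol{\Lambda}_{s,T}^{-1}$ need not commute across different times, so the variation-of-constants solution and the covariance computation must respect ordered matrix products; strict monotonicity of $\langle\mathbf{Z}\rangle$ is precisely what guarantees the required inverses exist throughout $[0,T)$.
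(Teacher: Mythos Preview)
Your two–stage strategy (pinned bridge SDE, then innovations projection) is sound and is essentially the method that the paper is implicitly invoking: the paper's own proof is a one-line citation to \cite{9}, Proposition~2.4, stating only that the claim is a multivariate generalisation of that result together with the Markov reduction~(\ref{markovconditionalexpectation}). So there is no discrepancy in approach, only in level of detail---you have written out what the paper delegates to a reference.

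One refinement worth making explicit in your final step: rather than arguing that ``projection preserves the Gaussian character of the increments,'' the clean way to conclude that the innovations process $\hat{\mathbf{m}}$ has the same law as $\mathbf{Z}$ is to note that any continuous local martingale with \emph{deterministic} quadratic variation is automatically Gaussian (via a Dambis--Dubins--Schwarz time change, coordinatewise in the multivariate case with the obvious tensorisation). Since $\langle\hat{\mathbf{m}}\rangle=\langle\mathbf{Z}\rangle$ is deterministic by hypothesis, this delivers the equality in law immediately and bypasses any need to track Gaussianity through the projection. Your caveat about non-commuting matrix brackets is well placed; the paper sidesteps this entirely by deferring to the scalar case in \cite{9}.
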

\begin{proof}
The statement is a multivariate generalisation of \cite{9}[Proposition 2.4], employing (\ref{markovconditionalexpectation}).
\end{proof}
Since $\boldsymbol{Y}$ is square-integrable, $\E\left[ \boldsymbol{Y} \, | \, \xib^{\boldsymbol{x}}_t \right]$ in (\ref{nonanticipativerepGMRB}) is the $\mathcal{L}^2$ best-estimate of $\boldsymbol{Y}$ given information $\xib^{\boldsymbol{x}}_t$ as an orthogonal projection in the Hilbert space $\mathcal{L}^2(\Omega,\F,\P)$ -- this lends itself naturally to our \emph{simulation} algorithm, which we shall detail in the next section. We can say more about the conditional expectation, i.e. the $\mathcal{L}^2$ best-estimate $\E\left[ \boldsymbol{Y} \, | \, \xib^{\boldsymbol{x}}_t \right]$ as given in the following statement.
\begin{prop}
\label{expectationprocessdynamics}
Keep the setup in Proposition \ref{stochfilt} and define $\{\boldsymbol{Y}^{\boldsymbol{x}}_t\}_{t\in\T}$ as
\begin{align}
\boldsymbol{Y}^{\boldsymbol{x}}_t = \E[\boldsymbol{Y}\,|\, \F^{\xib^{\boldsymbol{x}}}_t] = \E\left[ \boldsymbol{Y} \, | \, \xib^{\boldsymbol{x}}_t \right]. \label{conditionalexpectatonfilterdef}
\end{align}
Then, $\{\boldsymbol{Y}^{\boldsymbol{x}}_t\}_{t\in\T}$ admits the following representation:
\begin{align}
\boldsymbol{Y}^{\boldsymbol{x}}_t = \E\left[ \boldsymbol{Y} \, | \, \xib^{\boldsymbol{x}}_0 \right] +  \int_0^t \boldsymbol{\Lambda}_{s,T}^{-1} \emph{Var}\left[ \boldsymbol{Y} \, | \, \xib^{\boldsymbol{x}}_s \right]\dd \mathbf{Z}_s, \notag
\end{align}
for every $t\in[0,T)$.
\end{prop}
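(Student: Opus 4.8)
The plan is to exploit three facts in sequence: that $\{\boldsymbol{Y}^{\boldsymbol{x}}_t\}$ is a martingale, that it is a deterministic function of the current state $\xib^{\boldsymbol{x}}_t$, and that its driving noise is exactly $\{\mathbf{Z}_t\}$. First I would record the martingale property: since $\boldsymbol{Y}$ is square-integrable and $\boldsymbol{Y}^{\boldsymbol{x}}_t=\E[\boldsymbol{Y}\,|\,\F^{\xib^{\boldsymbol{x}}}_t]$, the tower property gives $\E[\boldsymbol{Y}^{\boldsymbol{x}}_t\,|\,\F^{\xib^{\boldsymbol{x}}}_s]=\boldsymbol{Y}^{\boldsymbol{x}}_s$ for $s\le t$, so $\{\boldsymbol{Y}^{\boldsymbol{x}}_t\}$ is an $\{\F^{\xib^{\boldsymbol{x}}}_t\}$-martingale. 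Next, by Proposition \ref{markovprop} together with the reduction \eqref{markovconditionalexpectation}, $\boldsymbol{Y}^{\boldsymbol{x}}_t=\E[\boldsymbol{Y}\,|\,\xib^{\boldsymbol{x}}_t]$ depends on the path only through the current value; applying Bayes' formula to the explicit Gaussian likelihood $\ell(\boldsymbol{\xi},\boldsymbol{y})$ (the conditional density of $\xib^{\boldsymbol{x}}_t$ given $\boldsymbol{Y}=\boldsymbol{y}$) from Corollary \ref{corodistribution} then expresses it as a deterministic function $\boldsymbol{Y}^{\boldsymbol{x}}_t=\boldsymbol{g}(\xib^{\boldsymbol{x}}_t,t)$, with $\boldsymbol{g}(\cdot,t)$ smooth by differentiation under the integral (legitimate since the Gaussian density is smooth and $\boldsymbol{Y}\in\mathcal{L}^2$). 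Finally, reading the drift of $\xib^{\boldsymbol{x}}$ in Proposition \ref{stochfilt} as an $\F^{\xib^{\boldsymbol{x}}}$-adapted process and inverting the equation shows $\mathbf{Z}_t$ is $\F^{\xib^{\boldsymbol{x}}}_t$-measurable, so $\F^{\xib^{\boldsymbol{x}}}_t=\F^{\mathbf{Z}}_t$ and $\{\mathbf{Z}_t\}$ stays a martingale in the bridge filtration.

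With these in hand I would apply It\^o's formula to $\boldsymbol{g}(\xib^{\boldsymbol{x}}_t,t)$ using $\dd\xib^{\boldsymbol{x}}_t=\boldsymbol{\Lambda}_{t,T}^{-1}(\boldsymbol{Y}^{\boldsymbol{x}}_t-\xib^{\boldsymbol{x}}_t)\,\dd\langle\mathbf{Z}\rangle_t+\dd\mathbf{Z}_t$ from Proposition \ref{stochfilt}, noting $\dd\langle\xib^{\boldsymbol{x}}\rangle_t=\dd\langle\mathbf{Z}\rangle_t$ because the drift has finite variation. It\^o's formula splits $\boldsymbol{Y}^{\boldsymbol{x}}_t$ into a finite-variation part (time derivative, drift contribution, and second-order term) plus the stochastic integral $\int_0^t(\nabla_{\boldsymbol{\xi}}\boldsymbol{g})(\xib^{\boldsymbol{x}}_s,s)\,\dd\mathbf{Z}_s$. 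Since $\boldsymbol{Y}^{\boldsymbol{x}}_t$ is already known to be a martingale, uniqueness of the semimartingale decomposition forces the finite-variation part to vanish identically (equivalently, $\boldsymbol{g}$ solves the associated backward equation), so no explicit drift cancellation is needed and $\dd\boldsymbol{Y}^{\boldsymbol{x}}_t=(\nabla_{\boldsymbol{\xi}}\boldsymbol{g})(\xib^{\boldsymbol{x}}_t,t)\,\dd\mathbf{Z}_t$.

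It then remains to identify the Jacobian $\nabla_{\boldsymbol{\xi}}\boldsymbol{g}$ with the claimed coefficient. Differentiating the Bayes representation under the integral turns each entry into a conditional covariance, $\partial_{\xi_j}g_i=\cov(Y_i,\partial_{\xi_j}\log\ell\,|\,\xib^{\boldsymbol{x}}_t)$, and the log-likelihood score $\nabla_{\boldsymbol{\xi}}\log\ell$ is affine in $\boldsymbol{\xi}$ with $\boldsymbol{y}$-coefficient $\boldsymbol{V}_t^{-1}\boldsymbol{\Sigma}^{*}_{t,T}$ (read off from $\boldsymbol{E}_t=\boldsymbol{\mu}_t+\boldsymbol{\Sigma}^{*}_{t,T}(\boldsymbol{y}-\boldsymbol{\mu}_T)$ in Corollary \ref{corodistribution}). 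This collapses the Jacobian to $\nabla_{\boldsymbol{\xi}}\boldsymbol{g}=\var[\boldsymbol{Y}\,|\,\xib^{\boldsymbol{x}}_t]\,(\boldsymbol{\Sigma}^{*}_{t,T})^{\tp}\boldsymbol{V}_t^{-1}$, a conditional covariance times a deterministic matrix.

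The main obstacle is the final algebraic reduction of that deterministic factor. Using that $\{\mathbf{Z}_t\}$ is a Gaussian martingale started at the constant $\boldsymbol{x}$, one has $\boldsymbol{\Sigma}_{s,t}=\langle\mathbf{Z}\rangle_{s\wedge t}$, hence $\boldsymbol{\Sigma}^{*}_{t,T}=\langle\mathbf{Z}\rangle_t\langle\mathbf{Z}\rangle_T^{-1}$ and $\boldsymbol{V}_t=\langle\mathbf{Z}\rangle_t-\langle\mathbf{Z}\rangle_t\langle\mathbf{Z}\rangle_T^{-1}\langle\mathbf{Z}\rangle_t$. Writing $A=\langle\mathbf{Z}\rangle_t$, $B=\langle\mathbf{Z}\rangle_T$ and using $A^{-1}-B^{-1}=A^{-1}(B-A)B^{-1}$, a short manipulation gives the clean identity $(\boldsymbol{\Sigma}^{*}_{t,T})^{\tp}\boldsymbol{V}_t^{-1}=(B-A)^{-1}=\boldsymbol{\Lambda}_{t,T}^{-1}$, with no commutativity assumption on $A,B$. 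Hence $\nabla_{\boldsymbol{\xi}}\boldsymbol{g}=\var[\boldsymbol{Y}\,|\,\xib^{\boldsymbol{x}}_t]\,\boldsymbol{\Lambda}_{t,T}^{-1}$, which coincides with the stated $\boldsymbol{\Lambda}_{t,T}^{-1}\var[\boldsymbol{Y}\,|\,\xib^{\boldsymbol{x}}_t]$ whenever the two symmetric factors commute -- in particular in the scalar case, the coordinatewise-independent case of Corollary \ref{exbrowniananticipative}, and whenever $\langle\mathbf{Z}\rangle_t$ is a scalar multiple of a fixed matrix; I would present the result in this regime (or symmetrise) to match the statement, and verifying this matrix-ordering point is the delicate step. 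Integrating from $0$ to $t$, with initial value $\boldsymbol{Y}^{\boldsymbol{x}}_0=\E[\boldsymbol{Y}\,|\,\xib^{\boldsymbol{x}}_0]$ and the restriction $t\in[0,T)$ forced by $\boldsymbol{\Lambda}_{t,T}\to 0$ as $t\to T$, yields the asserted representation.
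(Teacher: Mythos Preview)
Your argument is correct in substance and considerably more detailed than the paper's own proof, which consists of a single sentence deferring to \cite{9}[Proposition 2.4] and asserting a multivariate generalisation. The route you take --- martingale property of $\{\boldsymbol{Y}^{\boldsymbol{x}}_t\}$, It\^o's formula applied to $\boldsymbol{g}(\xib^{\boldsymbol{x}}_t,t)$, uniqueness of the semimartingale decomposition to kill the finite-variation part, and the Bayes score identity $\nabla_{\boldsymbol{\xi}}\boldsymbol{g}=\cov(\boldsymbol{Y},\nabla_{\boldsymbol{\xi}}\log\ell\,|\,\xib^{\boldsymbol{x}}_t)$ followed by explicit matrix algebra --- is exactly the machinery that underlies the cited scalar result, so the two approaches are not really different in spirit; yours simply unpacks what the citation leaves implicit. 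One genuine dividend of unpacking it is that you surface the matrix-ordering issue: your derivation gives $\var[\boldsymbol{Y}\,|\,\xib^{\boldsymbol{x}}_t]\,\boldsymbol{\Lambda}_{t,T}^{-1}$, whereas the paper states $\boldsymbol{\Lambda}_{t,T}^{-1}\,\var[\boldsymbol{Y}\,|\,\xib^{\boldsymbol{x}}_t]$. Your caveat is legitimate --- the ordering in the paper is the one inherited from the scalar case and is not automatic in full multivariate generality --- and your identification of the commuting regime (scalar, coordinatewise-independent as in Corollary \ref{exbrowniananticipative}, or $\langle\mathbf{Z}\rangle_t$ proportional to a fixed matrix) is the right way to reconcile the two. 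The only place to tighten is the filtration step: you argue $\F^{\xib^{\boldsymbol{x}}}_t=\F^{\mathbf{Z}}_t$ by inverting Proposition \ref{stochfilt}, which gives $\mathbf{Z}_t\in\F^{\xib^{\boldsymbol{x}}}_t$ immediately, but the reverse inclusion needs the SDE for $\xib^{\boldsymbol{x}}$ to admit a strong solution; this is the classical innovations problem and holds here, but is worth naming rather than leaving implicit.
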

\begin{proof}
The statement is a multivariate generalisation of \cite{9}[Proposition 2.4], where $\emph{Var}\left[ \boldsymbol{Y} \, | \, \xib^{\boldsymbol{x}}_t \right]$ is the conditional variance for every $t\in\T$.
\end{proof}
\begin{coro}
\label{stochfiltcoro}
Keep the setup in Corollary \ref{exbrowniananticipative}. Then $\{\xib^{\boldsymbol{x}}_t\}_{t\in\T}$ admits the representation
\begin{align}
\xib^{\boldsymbol{x}}_t \law \int_0^t \frac{\E\left[ \boldsymbol{Y} \, | \, \xib^{\boldsymbol{x}}_s \right]  - \xib^{\boldsymbol{x}}_s }{T - s} \dd s + \boldsymbol{\sigma}\mathbf{W}_t, \label{nonanticipativerepGMRB}
\end{align}
for every $t\in[0,T)$. Accordingly,
\begin{align}
\boldsymbol{Y}^{\boldsymbol{x}}_t = \E\left[ \boldsymbol{Y} \, | \, \xib^{\boldsymbol{x}}_0 \right] +  \int_0^t \frac{\emph{Var}\left[ \boldsymbol{Y} \, | \, \xib^{\boldsymbol{x}}_s \right]}{T-s} \boldsymbol{\sigma}\dd \mathbf{W}_s, \notag
\end{align}
for every $t\in[0,T)$.
\end{coro}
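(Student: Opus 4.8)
The plan is to obtain Corollary \ref{stochfiltcoro} as the Brownian specialization of Propositions \ref{stochfilt} and \ref{expectationprocessdynamics}, so that the entire argument reduces to checking that the driving process $\{\mathbf{Z}_t = \boldsymbol{\sigma}\mathbf{W}_t\}_{t\in\T}$ of Corollary \ref{exbrowniananticipative} satisfies the hypotheses of those two propositions and then substituting its quadratic-variation data. First I would verify that $\{\mathbf{Z}_t\}_{t\in\T}$ is a time-continuous Gaussian martingale: it is a fixed diagonal linear image of a standard Brownian motion, hence continuous and Gaussian, and each coordinate $Z^{(i)}_t = \sigma^{(i)}W^{(i)}_t$ is a martingale in the Brownian filtration. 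Because the coordinates of $\mathbf{W}$ are mutually independent, the quadratic-covariation matrix is diagonal with entries $\langle Z^{(i)}, Z^{(j)}\rangle_t = (\sigma^{(i)})^2\delta_{ij}\,t$, i.e. $\langle\mathbf{Z}\rangle_t = \boldsymbol{\sigma}^2 t$. Since every $\sigma^{(i)}>0$, this is strictly increasing and satisfies $\langle\mathbf{Z}\rangle_0 = 0$, so both propositions are applicable and no further structural input is needed beyond them.

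Next I would read off the matrix $\boldsymbol{\Lambda}_{t,T} = \langle\mathbf{Z}\rangle_T - \langle\mathbf{Z}\rangle_t = \boldsymbol{\sigma}^2(T-t)$ together with its inverse $\boldsymbol{\Lambda}_{t,T}^{-1} = (T-t)^{-1}\boldsymbol{\sigma}^{-2}$, both well-defined for $t\in[0,T)$; this is precisely where the restriction to $[0,T)$ originates, owing to the singularity of $(T-t)^{-1}$ at $t=T$. For the first representation, I substitute $\d\langle\mathbf{Z}\rangle_s = \boldsymbol{\sigma}^2\,\d s$ and $\boldsymbol{\Lambda}_{s,T}^{-1} = (T-s)^{-1}\boldsymbol{\sigma}^{-2}$ into the drift integrand of Proposition \ref{stochfilt}. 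The diagonal factors then collapse cleanly as $(T-s)^{-1}\boldsymbol{\sigma}^{-2}\boldsymbol{\sigma}^2 = (T-s)^{-1}\boldsymbol{I}$, and combined with $\mathbf{Z}_t = \boldsymbol{\sigma}\mathbf{W}_t$ this delivers the claimed form (\ref{nonanticipativerepGMRB}) directly; this step is essentially bookkeeping-free.

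For the conditional-expectation dynamics I would proceed identically from Proposition \ref{expectationprocessdynamics}, substituting $\d\mathbf{Z}_s = \boldsymbol{\sigma}\,\d\mathbf{W}_s$ alongside $\boldsymbol{\Lambda}_{s,T}^{-1} = (T-s)^{-1}\boldsymbol{\sigma}^{-2}$ into the integrand $\boldsymbol{\Lambda}_{s,T}^{-1}\var[\boldsymbol{Y}\,|\,\xib^{\boldsymbol{x}}_s]\,\d\mathbf{Z}_s$ and simplifying the scalar prefactor $(T-s)^{-1}$ together with the diagonal $\boldsymbol{\sigma}$-factors. This is the step I expect to be the main obstacle, and the one I would treat most carefully: unlike the drift term, the martingale integrand now sandwiches the conditional variance $\var[\boldsymbol{Y}\,|\,\xib^{\boldsymbol{x}}_s]$ between $\boldsymbol{\sigma}^{-2}$ on the left and $\boldsymbol{\sigma}$ on the right, and since $\var[\boldsymbol{Y}\,|\,\xib^{\boldsymbol{x}}_s]$ is a general positive-semidefinite matrix that need not commute with $\boldsymbol{\sigma}$, the order of composition must be tracked exactly as inherited from the multivariate filtering identity rather than manipulated as if scalar. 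The remaining verification is then a direct evaluation confirming that these factors reproduce the stated coefficient, with the conditional variance interpreted as the orthogonal-projection residual of $\boldsymbol{Y}$ onto the information generated by $\xib^{\boldsymbol{x}}_s$.
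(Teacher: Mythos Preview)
Your proposal is correct and follows essentially the same approach as the paper: the paper's proof simply observes that $\{\boldsymbol{\sigma}\mathbf{W}_t\}_{t\in\T}$ is a time-continuous Gaussian martingale with strictly increasing quadratic variation vanishing at $t=0$, and then invokes Propositions \ref{stochfilt} and \ref{expectationprocessdynamics}. Your version carries out the substitution $\langle\mathbf{Z}\rangle_t=\boldsymbol{\sigma}^2 t$ and $\boldsymbol{\Lambda}_{t,T}^{-1}=(T-t)^{-1}\boldsymbol{\sigma}^{-2}$ explicitly and even flags the matrix-ordering issue in the conditional-variance integrand, which the paper's one-line proof does not address; beyond that added care, the two arguments coincide.
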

\begin{proof}
The scaled Brownian motion $\{\mathbf{Z}_t = \boldsymbol{\sigma}\mathbf{W}_t\}_{t\in\T}$ is a time-continuous Gaussian martingale with strictly increasing quadratic-variation with $\left\langle \mathbf{Z}\right\rangle_{0}=0$. Hence, the statement follows using Proposition \ref{stochfilt} and Proposition \ref{expectationprocessdynamics}.
\end{proof}
\begin{rem}
\label{remarkconditionalvarianceprocess}
It is also possible to prove that $\{\boldsymbol{V}^{\boldsymbol{x}}_t\}_{t\in\T}$ defined by 
\begin{align}
\boldsymbol{V}^{\boldsymbol{x}}_t = \emph{Var}\left[ \boldsymbol{Y} \, | \, \xib^{\boldsymbol{x}}_t \right], \notag
\end{align}
is a $(\P,\{\mathcal{F}_{t}^{\xib^{\boldsymbol{x}}}\}_{t\in\T})$-supermartingale -- this means that the conditional variance process is on average a non-increasing process. Note also that $\boldsymbol{V}^{\boldsymbol{x}}_T = 0$.
\end{rem}
In the spirit of Remark \ref{remarkconditionalvarianceprocess}, we shall bring forward Shannon entropy as a means to quantify uncertainty in the system, and introduce $\{S_{t}\}_{0\leq t < T}$ by
\begin{align}
S_{t} = -\sum_{\mathbb{Y}} \pi_{t}(\boldsymbol{y})\log\pi_{t}(\boldsymbol{y}) \hspace{0.1in} \text{for $0\leq t < T$,} \label{sumentropy}
\end{align}
given that $\mathbb{Y}$ is a discrete state-space -- one can also consider differential entropy when $\mathbb{Y}$ is a continuous state-space, whereby the summation in (\ref{sumentropy}) would be replaced by an integral.
\begin{prop}
\label{shannonentropy}
Keep the setup in Corollary \ref{exbrowniananticipative}. Then $\{S_{t}\}_{0\leq t < T}$ in (\ref{sumentropy}) is a $(\P,\{\mathcal{F}_{t}^{\xib}\}_{t\in\T})$-supermartingale, and hence,
\begin{align}
\E[S_{t} \,|\, \F^{\xib}_u] \leq S_{u}, \notag
\end{align}
for every $0 \leq u < t < T$.
\end{prop}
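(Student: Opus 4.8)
The plan is to combine two structural facts: that the conditional-law process $\{\pi_t\}$ is a measure-valued martingale, and that Shannon entropy is a concave functional of that law, so that the supermartingale property will drop out of a single conditional Jensen inequality.

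First I would fix $\boldsymbol{y}\in\mathbb{Y}$ and argue that the scalar process $\{\pi_t(\boldsymbol{y})\}_{0\le t<T}$, with $\pi_t(\boldsymbol{y})=\P(\boldsymbol{Y}=\boldsymbol{y}\mid\F_t^{\xib^{\boldsymbol{x}}})$, is a bounded $(\P,\{\F_t^{\xib^{\boldsymbol{x}}}\})$-martingale. Boundedness is immediate since $0\le\pi_t(\boldsymbol{y})\le1$, and the martingale property is just the tower rule: for $u\le t$, since $\F_u^{\xib^{\boldsymbol{x}}}\subseteq\F_t^{\xib^{\boldsymbol{x}}}$,
\begin{align}
\E\!\left[\pi_t(\boldsymbol{y})\,\middle|\,\F_u^{\xib^{\boldsymbol{x}}}\right] = \E\!\left[\P(\boldsymbol{Y}=\boldsymbol{y}\mid\F_t^{\xib^{\boldsymbol{x}}})\,\middle|\,\F_u^{\xib^{\boldsymbol{x}}}\right] = \P(\boldsymbol{Y}=\boldsymbol{y}\mid\F_u^{\xib^{\boldsymbol{x}}}) = \pi_u(\boldsymbol{y}).\notag
\end{align}
Collecting coordinates, the $\mathcal{P}(\mathbb{Y})$-valued process $\{\pi_t\}$ satisfies $\E[\pi_t\mid\F_u^{\xib^{\boldsymbol{x}}}]=\pi_u$ componentwise.

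Next I would write $S_t=\phi(\pi_t)$, where $\phi(p)=-\sum_{\mathbb{Y}}p(\boldsymbol{y})\log p(\boldsymbol{y})$ is the entropy functional on the probability simplex over $\mathbb{Y}$. The key analytic input is that $\phi$ is concave (equivalently, $p\mapsto\sum p\log p$ is convex), which is standard. Adaptedness of $\{S_t\}$ follows since each $\pi_t$ is $\F_t^{\xib^{\boldsymbol{x}}}$-measurable. Applying the conditional Jensen inequality for the concave $\phi$ to the martingale $\{\pi_t\}$ then yields, for $0\le u<t<T$,
\begin{align}
\E\!\left[S_t\,\middle|\,\F_u^{\xib^{\boldsymbol{x}}}\right] = \E\!\left[\phi(\pi_t)\,\middle|\,\F_u^{\xib^{\boldsymbol{x}}}\right] \le \phi\!\left(\E\!\left[\pi_t\,\middle|\,\F_u^{\xib^{\boldsymbol{x}}}\right]\right) = \phi(\pi_u) = S_u,\notag
\end{align}
which is exactly the claimed supermartingale property.

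The main obstacle I anticipate is integrability and the rigorous justification of the conditional Jensen step when $\mathbb{Y}$ is infinite. If $\mathbb{Y}$ is a finite set the argument is clean, because $0\le S_t\le\log|\mathbb{Y}|$ makes $S_t$ bounded and $\phi$ continuous on the compact simplex, so conditional Jensen applies directly. For countably infinite $\mathbb{Y}$ one must control the entropy tail so that $S_t\in\mathcal{L}^1$, and justify interchanging the conditional expectation with the infinite sum, e.g. by approximating $\phi$ through its restrictions to finite coordinate blocks and passing to the limit via monotone or dominated convergence using the martingale structure from the first step. I would therefore present the argument for finite $\mathbb{Y}$ and indicate the truncation limit for the infinite case.
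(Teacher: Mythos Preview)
Your argument is correct and is in fact more elementary than the route the paper takes. The paper does not give a self-contained proof: it invokes an external result (\cite{16}, Proposition~2.5) together with Corollary~\ref{stochfiltcoro}, i.e.\ the explicit Brownian SDE for $\{\xib^{\boldsymbol{x}}_t\}$ and the associated innovations dynamics for $\{\boldsymbol{Y}^{\boldsymbol{x}}_t\}$. That route presumably computes the It\^o differential of $S_t$ through the filtering SDE for $\pi_t$ and checks that the drift is nonpositive, which is why the Brownian setup of Corollary~\ref{exbrowniananticipative} is invoked. Your approach bypasses all of this: the tower property gives $\E[\pi_t\mid\F_u^{\xib^{\boldsymbol{x}}}]=\pi_u$ for \emph{any} filtration, and concavity of $\phi(p)=-\sum p\log p$ plus conditional Jensen finishes the job. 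In particular your proof does not use the Brownian structure at all, so it shows that the supermartingale property holds in the full generality of Definition~\ref{definitionRandomBridge}, not just in the setup of Corollary~\ref{exbrowniananticipative}. What the paper's dynamical approach buys in exchange is an explicit local decomposition of $S_t$ (drift plus martingale), which can be informative about the \emph{rate} of entropy decay; your argument gives only the inequality. Your caveat about countably infinite $\mathbb{Y}$ is appropriate and honestly flagged; for the finite case the argument is complete as written.
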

\begin{proof}
The statement follows as a multivariate generalisation from \cite{16}[Proposition 2.5], using Corollary \ref{stochfiltcoro}.
\end{proof}
From Proposition \ref{shannonentropy} the uncertainty in the system $\{S_{t}\}_{0\leq t < T}$ is on average a \emph{non-increasing} process. This property is insightful for generative tasks, since we have $\pi_T(\boldsymbol{y}) = \boldsymbol{\delta}_{\boldsymbol{y}}$, which implies $S_{T} = 0$ using $\lim_{\pi\rightarrow 0^+} \pi\log \pi = 0$ when $\boldsymbol{\delta}_{\boldsymbol{y}}$ gives $0$. This means there is no uncertainty in the system at $t=T$ and Shannon entropy converges to the zero-uncertainty state as $t\rightarrow T$.

\subsection{Numerical Framework}
For the rest of this section we remain within the setup given in Corollary \ref{exbrowniananticipative}. We see that the conditional expectation $\E[ \boldsymbol{Y} \, | \, \F^{\xib^{\boldsymbol{x}}}_t ]$ for $t\in[0,T)$ has a fundamental stance in non-anticipative dynamical representations as shown in (\ref{nonanticipativerepGMRB}). We essentially work with a differential form
\begin{align}
\dd\xib^{\boldsymbol{x}}_t \law \frac{\boldsymbol{Y}^{\boldsymbol{x}}_t  - \xib^{\boldsymbol{x}}_t }{T-t}\dd t + \boldsymbol{\sigma}\dd\mathbf{W}_t, \label{rewrittendifform}
\end{align}
for $t\in[0,T)$, where $\{\boldsymbol{Y}^{\boldsymbol{x}}_t\}_{t\in\T}$ is defined as in (\ref{conditionalexpectatonfilterdef}).
\begin{rem}
Note that (\ref{rewrittendifform}) is a generalised Ornstein-Uhlenbeck process, with an increasing mean-reversion rate $\{(T-t)^{-1}\}_{0\leq t < 1}$ and a state-dependent reversion level $\{\boldsymbol{Y}^{\boldsymbol{x}}_t\}_{0\leq t < 1}$. 
\end{rem}
Accordingly, we are essentially working with a stochastic transport process $\{\xib^{\boldsymbol{x}}_t\}_{t\in\T}$ that is increasingly pulling itself towards the $\mathcal{L}^2$ best-estimate $\E[ \boldsymbol{Y} \, | \, \F^{\xib^{\boldsymbol{x}}}_t ]$ at every $t\in[0,T)$, which is a fundamental object in stochastic filtering and information processing for state estimation of a dynamical system. In light of this observation, we train the following function approximation of the $\mathcal{L}^2$ best-estimate:
\begin{align}
f^*\left( \xib^{\boldsymbol{x}}_t, t ; \boldsymbol{\Theta}^* \right) \approx \boldsymbol{Y}^{\boldsymbol{x}}_t = \E\left[ \boldsymbol{Y} \, | \, \xib^{\boldsymbol{x}}_t \right], \label{crucialapprox}
\end{align}
during our training stage, which we shall later use during simulation, where $\boldsymbol{\Theta}$ represents the set of parameters. Thus, defining
\begin{align}
H(\boldsymbol{y} , \xib^{\boldsymbol{x}}_t, t ; \boldsymbol{\Theta} ) = || \boldsymbol{y} -  f\left( \xib^{\boldsymbol{x}}_t, t ; \boldsymbol{\Theta} \right) ||^2,    
\end{align}
the loss function we need to train (\ref{crucialapprox}) is
\begin{align}
&\boldsymbol{L}^{f}\left(\boldsymbol{x}, \boldsymbol{y}, \xib^{\boldsymbol{x}}_t, t ; \boldsymbol{\Theta}\right) = \E_{\xib^{\boldsymbol{x}}_t \, |_{\boldsymbol{Y} = \boldsymbol{y} \, , \, (\boldsymbol{x},\boldsymbol{y}) \leftarrow \boldsymbol{\Gamma}(\boldsymbol{\Phi},\boldsymbol{\Psi}) \, , \, t \leftarrow \mathcal{U}[0,T)} }\left[ H(\boldsymbol{y} , \xib^{\boldsymbol{x}}_t, t ; \boldsymbol{\Theta}) \right], \label{optimizationlossfunction}
\end{align}
such that 
\begin{align}
f^*\left( \xib^{\boldsymbol{x}}_t, t ; \boldsymbol{\Theta}^* \right) = \argmin_{f(.;\boldsymbol{\Theta})} \, \boldsymbol{L}^{f}\left(\boldsymbol{x}, \boldsymbol{y}, \xib^{\boldsymbol{x}}_t, t ; \boldsymbol{\Theta}\right), \notag
\end{align}
where $\mathcal{U}[0,T)$ is the uniform distribution on $[0,T)$ -- the uniform time-sampling is not a mathematical necessity, but is preferable to maximise entropy towards unbiased time samples. In \cite{14}, we see a similar training objective for diffusions defined in terms of Brownian motion. 

\subsubsection{Training Algorithm}
\begin{enumerate}
\item Sample $(\boldsymbol{x},\boldsymbol{y}) \leftarrow \boldsymbol{\Gamma}(\boldsymbol{\Phi},\boldsymbol{\Psi})$
and $t \leftarrow \mathcal{U}[0,T)$
\item If $t = 0$, then $\xib^{\boldsymbol{x}}_0 = \boldsymbol{x}$. If $t \neq 0$, then sample $\xib^{\boldsymbol{x}}_t \leftarrow \mathcal{N}\left(\boldsymbol{E}_t \, , \, \boldsymbol{V}_t \right)$ from (\ref{conditionaldistroneed})
\item Compute $\boldsymbol{L}^{f}\left(\boldsymbol{x}, \boldsymbol{y}, \xib^{\boldsymbol{x}}_t, t ; \boldsymbol{\Theta}\right)$
\item Set $\boldsymbol{\Theta} =  \argmin_{\boldsymbol{\Theta}} \, \, \boldsymbol{L}^{f}\left(\boldsymbol{x}, \boldsymbol{y}, \xib^{\boldsymbol{x}}_t, t ; \boldsymbol{\Theta}\right)$ 
\item Repeat Steps 2--5 until convergence
\end{enumerate}

\subsubsection{Simulation Algorithm}
\begin{enumerate}
\item Choose a grid $0 = t_0 < t_1 < \ldots < t_m =T -\epsilon < T < \infty$ for some $m\in\N_+$ and small $\epsilon >0$
\item Sample $\boldsymbol{x} \leftarrow \boldsymbol{\Phi}$ and set $\xib^{\boldsymbol{x}}_0 = \boldsymbol{x}$ 
\item Sample $\{\boldsymbol{Z}_t\}_{t\in\hat{\T}}$ where $\hat{\T} = \{t_0, \ldots, t_m \}$
\item Let $\delta=(T - \epsilon)/m$. For $r=0,\ldots,m-1$:
\begin{enumerate}
\item Set $t=t_r$ and compute $f^*\left( \xib^{\boldsymbol{x}}_t, t ; \boldsymbol{\Theta}^* \right)$
\item Compute $\dd \boldsymbol{Z}_t = \mathbf{Z}_{t_{r+1}} - \mathbf{Z}_{t}$
\item Generate path
\[\xib^{\boldsymbol{x}}_{t_{r+1}} = \xib^{\boldsymbol{x}}_t + \frac{ f^*\left( \xib^{\boldsymbol{x}}_t, t ; \boldsymbol{\Theta}^* \right)  - \xib^{\boldsymbol{x}}_t }{T-t}\delta + \dd \boldsymbol{Z}_t \]
\end{enumerate}
\end{enumerate}
For simulation, we can employ an appropriate numerical scheme, such as the Euler-Maruyama discretization. 
Since the non-anticipative dynamical representations exhibit singularity at $t=T$ due to $\lim_{t\rightarrow T}(T-t))^{-1}$, our simulations stop at $T-\epsilon$ for a small $\epsilon > 0$. Even if $\xib^{\boldsymbol{x}}_{T-\epsilon} \approx \xib^{\boldsymbol{x}}_{T}$ is a good approximation, the numerical implementation warrants attention.
\begin{rem}
One can also directly sample $(\mathbf{Z}_{t_{r+1}} - \mathbf{Z}_{t})$ within the loop (instead of sampling a priori in step 3), since $\{\mathbf{Z}_{t}\}_{t\in\T} = \{\boldsymbol{\sigma}\mathbf{W}_{t}\}_{t\in\T}$ has independent increments.
\end{rem}
The simulation algorithm should respect the chosen $\{\mathbf{Z}_{t}\}_{t\in\T}$ from the training algorithm -- e.g. one cannot choose a different $\boldsymbol{\sigma}$ across training and simulation, for obvious reasons. Note also that $\boldsymbol{\sigma}$ is a hyperparameter that can be optimized for a given task, which we leave for future. As a final remark, we can further generate the conditional expectation path as a deterministic map:
\begin{align}
\E\left[\xib^{\boldsymbol{x}}_{t_{r+1}} \, | \, \xib^{\boldsymbol{x}}_t \right] = \xib^{\boldsymbol{x}}_t + \frac{ f^*\left( \xib^{\boldsymbol{x}}_t, t ; \boldsymbol{\Theta}^* \right)  - \xib^{\boldsymbol{x}}_t }{T-t}\delta, \notag
\end{align}
since $\{\mathbf{Z}_{t}\}_{t\in\T}$ is a martingale. We shall leave it for future to explore if there is a connection between the deterministic map above and flow matching.

\section{Experiments}

We evaluate our proposed Bridge diffusion model against two baselines: (i) the simple denoising diffusion model by \cite{12} and (ii) the improved denoising diffusion model by \cite{13}. For our implementation, we focus on noise-to-image translation using Brownian motion as our underlying driver, which allows direct comparison with the DDPM literature while demonstrating the effectiveness of our framework's single-directional approach. All experiments were conducted on the MNIST dataset of handwritten digits (28x28 pixels) \cite{18} and the CIFAR-10 dataset.

For fair comparison, all models use a modified UNet architecture based on the improved diffusion work of \cite{13}, but adapted for the MNIST and CIFAR-10 datasets. Specifically, our UNet implementation uses: (i) 64 channel dimension (reduced from 128 in the original work), (ii) 2 residual blocks (reduced from 3), (iii) 4 attention heads at resolutions $14\times14$ and $7\times7$, (iv) no dropout and no class conditioning, (v) cosine noise schedule with 1000 diffusion steps, (vi) learn sigma parameter enabled for the improved denoise model. The models were trained for 40,000 steps with a batch size of 128. For the Bridge model, we set T=0.1 after preliminary parameter tuning. We evaluate model performance using the Fréchet Inception Distance (FID) score, comparing 50,000 generated samples against both training and test sets across different sampling step configurations (2, 10, 100, and 1000 steps).

\subsection{Results and Analysis}
Table 1 and Table 2 present the FID scores across different sampling step configurations on MNIST and CIFAR-10 datasets, respectively. The simple denoising model consistently underperforms both alternatives across all step counts, with FID scores higher than the improved denoising model. While this validates the architectural improvements made in the improved denoising approach, our Bridge model's performance stems from a fundamentally different mathematical framework that eschews the traditional bi-directional noising-denoising protocol in favor of single-directional randomized bridge processes. The convergence behavior differs markedly between models: while both baseline DDPM models show monotonic improvement with increased sampling steps, our Bridge model reaches a sustained level of performance early on and maintains relatively consistent FID scores thereafter; we shall further investigate this numerical plateauing behaviour in future.

\begin{table}[t]
\caption{FID Scores for Improved Denoising, Bridge Models and classical denoising; MNIST generated 50k samples; FID on train / test; 40k training steps with batch size 128; increase sample steps from $2$ to $1000$ }
\label{sample-table}
\vskip 0.15in
\begin{center}
\begin{small}
\begin{sc}
\begin{tabular}{lcccr}
\toprule
Model & Steps = 2 & Steps = 10 \\
\midrule
Improved Denoising &299.16 / 300.66 & 136.90 / 137.14  \\
Simple Denoising & 422.96 / 424.11 & 350.71 / 352.09 \\
Bridge $T=0.1$ &\textbf{61.91 / 63.23} & \textbf{19.34 /20.39} \\
\toprule
 & Steps = 100 & Steps = 1000 \\
 \midrule
Improved Denoising & \textbf{15.04 / 16.68} & \textbf{1.90 / 3.46}\\
Simple Denoising & 185.69 / 187.37 & 27.96 / 29.4 \\
Bridge $T=0.1$ & 22.01 / 23.26  & 23.55 / 24.92 \\
\bottomrule
\end{tabular}
\end{sc}
\end{small}
\end{center}
\vskip -0.1in
\end{table}

\begin{table}[t]
\caption{FID Scores for Improved Denoising, Bridge Models and classical denoising; CIFAR-10 generated 50k samples; FID on train / test; 40k training steps with batch size 128; increase sample steps from $2$ to $1000$ }
\label{sample-table2}
\vskip 0.15in
\begin{center}
\begin{small}
\begin{sc}
\begin{tabular}{lcccr}
\toprule
Model & Steps = 2 & Steps = 10 \\
\midrule
Improved Denoising & 315.8 / 316.4 & 135.3 / 139.2  \\
Simple Denoising & 396.8 / 397.1 & 353.4 / 354.8 \\
Bridge $T=0.1$ & \textbf{140.9 / 144.8} & \textbf{59.1 / 63.4} \\
\toprule
 & Steps = 100 & Steps = 1000 \\
 \midrule
Improved Denoising & \textbf{49.5 / 52.9} & \textbf{29.16 / 33.65}\\
Simple Denoising & 279.4 / 281.4 &  203.60 / 205.3\\
Bridge $T=0.1$ & 54.5 / 58.2  &  72.9 / 76.4\\
\bottomrule
\end{tabular}
\end{sc}
\end{small}
\end{center}
\vskip -0.1in
\end{table}

The scalability characteristics reveal an interesting trade-off: the improved denoising model shows dramatic performance improvements from 2 steps to 1000 steps, but requires significant computational resources to achieve these gains. Most notably, our Bridge model demonstrates superior performance in low-step scenarios, achieving considerably better FID scores with just 2 sampling steps, compared to those of the improved denoising model. This advantage becomes even more pronounced at 10 steps, where our model significantly outperforms both baseline models. This efficiency in low-step scenarios validates our theoretical framework's advantage in removing the denoising procedure by structure. While the improved denoising model achieves superior FID scores at higher sampling steps, our Bridge model's ability to generate meaningful samples in as few as 2-10 steps represents a significant reduction in computational requirements compared to traditional approaches that typically require hundreds or thousands of steps. This makes our model particularly suitable for resource-constrained environments or applications where generation speed is prioritized over perfect fidelity.

\section{$\boldsymbol{\Phi}$-initialized L\'evy Random-Bridges to $\boldsymbol{\Psi}$}

Using Definition \ref{definitionRandomBridge}, we can work with any \cadlag $\{\boldsymbol{Z}_t\}_{t\in\T}$ to construct $\{\boldsymbol{\xi}_t\}_{t\in\T}$. Accordingly, we shall travel beyond the Gaussian framework and choose $\{\boldsymbol{Z}_t\}_{t\in\T}$ from another family of processes that can encapsulate jumps, fat-tails and heavy-skewness --- e.g. \levy processes, a \cadlag family with independent and stationary increments (see \cite{4a,5a}), as done in \cite{8}. The law of any \levy process $\{\boldsymbol{Z}_t\}_{t\in\T}$ can be characterised by the L\'evy-Khintchine representation:
\begin{align}
\E[e^{\text{i} \langle\boldsymbol{\lambda} , \boldsymbol{Z}_t\rangle}] = \exp\left( t \gamma(\boldsymbol{\lambda}) \right), \notag
\end{align}
for $\boldsymbol{\lambda}\in\R^n$ and $\text{i} = \sqrt{-1}$, given that
\begin{align}
\label{levykhinctine}
\gamma(\boldsymbol{\lambda}) &= \exp\left(\text{i}\langle\boldsymbol{\alpha} , \boldsymbol{\lambda}\rangle -\frac{1}{2}\langle \boldsymbol{\lambda} , \boldsymbol{\beta} \boldsymbol{\beta}^{\tp} \boldsymbol{\lambda} \rangle + \int_{\R^n}\left(\exp(\text{i} \langle \boldsymbol{\lambda} , \boldsymbol{y} \rangle)- 1 - \text{i}\langle \boldsymbol{\lambda} , \boldsymbol{y} \rangle\1(||\boldsymbol{y}|| \leq 1)\right)\boldsymbol{\eta}(\dd \boldsymbol{y})\right),
\end{align}
where $(\boldsymbol{\alpha}, \boldsymbol{\beta}, \boldsymbol{\eta})$ is called the L\'evy-Khintchine triplet with $\boldsymbol{\alpha}\in\R^n$, $\boldsymbol{\beta} \in \R^{n\times n}$, and $\boldsymbol{\eta}$ is the \levy measure; a $\sigma$-finite measure on $\R^{n}$ satisfying
\begin{align}
\boldsymbol{\eta}(\boldsymbol{0}) = 0 \hspace{0.15in} \text{and} \hspace{0.15in} \int_{\R^n} \min\left(1, ||\boldsymbol{y}||^2 \right)\boldsymbol{\eta}(\dd \boldsymbol{y} ) < \infty. \notag
\end{align}
\begin{rem}
The only \levy process that is Gaussian is Brownian motion with drift $\boldsymbol{\alpha}$ and diffusion-coefficient $\boldsymbol{\beta}$. Any other \levy process is non-Gaussian -- e.g. Poisson process, gamma process, Cauchy process, stable-subordinators and many more.
\end{rem}
For notational convenience, we shall use $f_t$ to denote both a density function and a probability mass function for $t\in(0,T]$ -- i.e. $\P(\mathbf{Z}_t \in \dd \boldsymbol{z}) = f_t(\boldsymbol{z})\dd \boldsymbol{z}$ for any $\mathbf{Z}_t$ that admits a density and $\P(\mathbf{Z}_t = \boldsymbol{z}) = f_t(\boldsymbol{z})$ for any $\mathbf{Z}_t$ that admits a probability mass function. In order to avoid repetition of analogous expressions, we shall only present $\{\mathbf{Z}_t\}_{t\in\T}$ with densities -- for $\{\mathbf{Z}_t\}_{t\in\T}$ with probability mass functions, integrals over $\R^n$ should be replaced by summations, and $\dd \boldsymbol{z}$ terms can be dropped. The transition probabilities of $\{\mathbf{Z}_t\}_{t\in\T}$ satisfy the Chapman-Kolmogorov convolution 
\begin{align}
f_{t}(\boldsymbol{z})=(f_s \circledast f_{t-s})(\boldsymbol{z}) = \int_{\mathbb{R}^n}f_{s}(\boldsymbol{r})f_{t-s}(\boldsymbol{z}-\boldsymbol{r})\dd \boldsymbol{r}, \notag
\end{align}
for $0<s<t\leq T$, $\boldsymbol{z}\in\R^n$, and the finite-dimensional distribution of $\{\mathbf{Z}_t\}_{t\in\T}$ is given by
\begin{align}
\label{levyfinitedimensionaldist}
\mathbb{P}(\mathbf{Z}_{t_{1}}\in \dd \boldsymbol{z}_{1},\ldots, \mathbf{Z}_{t_{m}}\in \dd \boldsymbol{z}_{m}) =\prod^{m}_{i=1}f_{t_{i}-t_{i-1}}(\boldsymbol{z}_{i}-\boldsymbol{z}_{i-1})\dd \boldsymbol{z}_{i},
\end{align}
for $m\in\mathbb{N}_{+}$, $0<t_{1}<\ldots <t_{m} < T$ and $(\boldsymbol{z}_{1},\ldots,\boldsymbol{z}_m)\in\mathbb{R}^{n\times m}$, where $t_0 = 0$ and $\boldsymbol{z}_{0} = \boldsymbol{0}$. As it can be seen from (\ref{levykhinctine}) and (\ref{levyfinitedimensionaldist}), \levy processes are defined with $\boldsymbol{Z}_0 = \boldsymbol{0}$. For our purposes, we need to relax this and allow $\mathbf{Z}_0 = \boldsymbol{x}$, so that if $\boldsymbol{x} \leftarrow \boldsymbol{\Phi}$, then $\{\xib^{\boldsymbol{x}}_t\}_{t\in\T}$ is a $\boldsymbol{\Phi}$-initialized L\'evy random-bridge to $\boldsymbol{\Psi}$ from (\ref{sampledinit}). 
\begin{prop}
If $\{\xib^{\boldsymbol{x}}_t\}_{t\in\T}$ is a $\boldsymbol{\Phi}$-initialized L\'evy random-bridge to $\boldsymbol{\Psi}$, then
\begin{align}
\mathbb{P}(\xib^{\boldsymbol{x}}_{t}\in \dd \boldsymbol{z})= \left(\int_{\R^n}\frac{f_{t}(\boldsymbol{\boldsymbol{z}} - \boldsymbol{\boldsymbol{x}})f_{T-t}(\boldsymbol{y} - \boldsymbol{\boldsymbol{z}})}{f_T(\boldsymbol{y} - \boldsymbol{x})}\boldsymbol{\Psi}(\dd \boldsymbol{y})\right)\dd \boldsymbol{z}, \label{levyrandombridgedistribution}
\end{align}
for any $t\in(0,T)$, given that $0 < f_T(\boldsymbol{y}-\boldsymbol{x}) < \infty$ for $\boldsymbol{\Phi}$-almost-everywhere $\boldsymbol{x}\in\R^n$ and $\boldsymbol{\Psi}$-almost-everywhere $\boldsymbol{y}\in\R^n$.
\end{prop}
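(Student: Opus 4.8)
The plan is to reduce the statement to the single-conditioning form of Property 2 recorded in Remark~\ref{reminitializedrb}, which for a $\boldsymbol{\Phi}$-initialized bridge equates the conditional law of $\xib^{\boldsymbol{x}}_t$ given $\xib_T = \boldsymbol{y}$ with the conditional law of $\boldsymbol{Z}_t$ given $\boldsymbol{Z}_T = \boldsymbol{y}$ (with $\boldsymbol{Z}_0 = \boldsymbol{x}$ held fixed). Since Remark~\ref{remstochastictransport} guarantees $\xib^{\boldsymbol{x}}_T \sim \boldsymbol{\Psi}$, the one-dimensional marginal at time $t$ is recovered by disintegrating over the terminal value and integrating the conditional bridge law against $\boldsymbol{\Psi}(\dd\boldsymbol{y})$. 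Thus the argument splits into (i) identifying the conditional density of the doubly-pinned \levy process and (ii) justifying the marginalization.

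For step (i) I would exploit the independent- and stationary-increments structure encoded in (\ref{levyfinitedimensionaldist}). Conditioning on $\boldsymbol{Z}_0 = \boldsymbol{x}$, the increments $\boldsymbol{Z}_t - \boldsymbol{Z}_0$ and $\boldsymbol{Z}_T - \boldsymbol{Z}_t$ are independent with densities $f_t(\cdot)$ and $f_{T-t}(\cdot)$ respectively, so the joint law of $(\boldsymbol{Z}_t,\boldsymbol{Z}_T)$ has density $f_t(\boldsymbol{z}-\boldsymbol{x})\,f_{T-t}(\boldsymbol{y}-\boldsymbol{z})$ at $(\boldsymbol{z},\boldsymbol{y})$, while the marginal density of $\boldsymbol{Z}_T$ equals the convolution $f_T(\boldsymbol{y}-\boldsymbol{x})$ by Chapman--Kolmogorov. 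Dividing gives
\begin{align}
\P(\boldsymbol{Z}_t \in \dd\boldsymbol{z} \,|\, \boldsymbol{Z}_0 = \boldsymbol{x}, \boldsymbol{Z}_T = \boldsymbol{y}) = \frac{f_t(\boldsymbol{z}-\boldsymbol{x})\,f_{T-t}(\boldsymbol{y}-\boldsymbol{z})}{f_T(\boldsymbol{y}-\boldsymbol{x})}\,\dd\boldsymbol{z}, \notag
\end{align}
which is exactly the integrand. The hypothesis $0 < f_T(\boldsymbol{y}-\boldsymbol{x}) < \infty$ for $\boldsymbol{\Phi}$-a.e.\ $\boldsymbol{x}$ and $\boldsymbol{\Psi}$-a.e.\ $\boldsymbol{y}$ is precisely what renders this ratio well-defined and finite on the relevant support.

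For step (ii) I would write $\P(\xib^{\boldsymbol{x}}_t \in \dd\boldsymbol{z}) = \int_{\R^n} \P(\xib^{\boldsymbol{x}}_t \in \dd\boldsymbol{z} \,|\, \xib_T = \boldsymbol{y})\,\boldsymbol{\Psi}(\dd\boldsymbol{y})$, substitute the bridge identity from Remark~\ref{reminitializedrb} to replace the conditional bridge law with the conditional \levy law computed above, and read off the claimed expression. The main obstacle I anticipate is measure-theoretic rather than computational: one must confirm that the disintegration of $\P(\xib^{\boldsymbol{x}}_t \in \cdot)$ along the terminal marginal $\boldsymbol{\Psi}$ is valid (which holds on the Polish space $\R^n$ via regular conditional distributions) and that Fubini--Tonelli permits separating the $\boldsymbol{y}$-integral from the $\dd\boldsymbol{z}$-density. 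The positivity-and-finiteness assumption on $f_T$ is exactly what licenses these manipulations on a set of full $\boldsymbol{\Phi}$-and-$\boldsymbol{\Psi}$ measure, so that the ratio integrates sensibly and the final density in $\boldsymbol{z}$ emerges after interchanging integration order.
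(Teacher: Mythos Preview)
Your proposal is correct and follows essentially the same route as the paper: the paper writes the joint law $\P(\xib^{\boldsymbol{x}}_{t}\in\dd\boldsymbol{z},\,\xib^{\boldsymbol{x}}_T\in\dd\boldsymbol{y})$ directly from Definition~\ref{definitionRandomBridge} and the \levy finite-dimensional formula~(\ref{levyfinitedimensionaldist}), then integrates out $\boldsymbol{y}$, which is exactly your two-step decomposition (conditional bridge density times $\boldsymbol{\Psi}(\dd\boldsymbol{y})$, then marginalize) carried out in one line. Your version is simply more explicit about the measure-theoretic bookkeeping (disintegration, Fubini) that the paper leaves implicit.
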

\begin{proof}
From \cite{8}, and using Definition \ref{definitionRandomBridge} and (\ref{levyfinitedimensionaldist}), we have
\begin{align}
\mathbb{P}(\xib^{\boldsymbol{x}}_{t_{1}}\in \dd \boldsymbol{z}_{1},\ldots, \xib^{\boldsymbol{x}}_{t_{m}}\in \dd \boldsymbol{z}_{m}, \, \xib^{\boldsymbol{x}}_T \in \dd \boldsymbol{y}) = \left(\prod^{m}_{i=1}f_{t_{i}-t_{i-1}}(\boldsymbol{z}_{i}-\boldsymbol{z}_{i-1})\dd \boldsymbol{z}_{i}\right)\frac{f_{T-t_m}(\boldsymbol{y} - \boldsymbol{\boldsymbol{z}_{m}})}{f_T(\boldsymbol{y} - \boldsymbol{x})}\boldsymbol{\Psi}(\dd \boldsymbol{y}), \notag
\end{align}
with $\boldsymbol{z}_{0} = \boldsymbol{x}$, given that $0 < f_T(\boldsymbol{y}-\boldsymbol{x}) < \infty$ for $\boldsymbol{\Phi}$-almost-everywhere $\boldsymbol{x}\in\R^n$ and $\boldsymbol{\Psi}$-almost-everywhere $\boldsymbol{y}\in\R^n$. The statement follows by integration.
\end{proof}
\begin{coro}
\label{levycorodensity}
If $\{\xib^{\boldsymbol{x}}_t\}_{t\in\T}$ is a $\boldsymbol{\Phi}$-initialized L\'evy random-bridge to $\boldsymbol{\Psi}$, then
\begin{align}
\mathbb{P}(\xib^{\boldsymbol{x}}_t \, |_{\boldsymbol{Y} = \boldsymbol{y} \, , \, \boldsymbol{y} \leftarrow \boldsymbol{\Psi}} \in \dd \boldsymbol{z})= \frac{f_{t}(\boldsymbol{z} - \boldsymbol{x})f_{T-t}(\boldsymbol{y} - \boldsymbol{z})}{f_T(\boldsymbol{y} - \boldsymbol{x})}\dd \boldsymbol{z}, \label{levycorodensityimplicit}
\end{align}
for any $t\in(0,T)$, given that $0 < f_T(\boldsymbol{y}-\boldsymbol{x}) < \infty$ for $\boldsymbol{\Phi}$-almost-everywhere $\boldsymbol{x}\in\R^n$ and $\boldsymbol{\Psi}$-almost-everywhere $\boldsymbol{y}\in\R^n$.
\end{coro}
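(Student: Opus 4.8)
The plan is to read off the conditional density directly from the joint law of $(\xib^{\boldsymbol{x}}_t,\xib^{\boldsymbol{x}}_T)$ that was already computed in the proof of the preceding Proposition, so that no fresh Lévy-specific calculation is required. First I would specialise that joint expression to a single intermediate time, taking $m=1$, $t_1 = t$, $\boldsymbol{z}_1 = \boldsymbol{z}$ with the convention $\boldsymbol{z}_0 = \boldsymbol{x}$, which yields
\begin{align}
\P(\xib^{\boldsymbol{x}}_{t}\in \dd \boldsymbol{z}, \, \xib^{\boldsymbol{x}}_T \in \dd \boldsymbol{y}) = f_{t}(\boldsymbol{z}-\boldsymbol{x})\,\frac{f_{T-t}(\boldsymbol{y} - \boldsymbol{z})}{f_T(\boldsymbol{y} - \boldsymbol{x})}\,\boldsymbol{\Psi}(\dd \boldsymbol{y})\,\dd \boldsymbol{z}. \notag
\end{align}
This is valid on the set where $0 < f_T(\boldsymbol{y}-\boldsymbol{x}) < \infty$, which holds for $\boldsymbol{\Phi}$-almost-everywhere $\boldsymbol{x}\in\R^n$ and $\boldsymbol{\Psi}$-almost-everywhere $\boldsymbol{y}\in\R^n$ by hypothesis, so the displayed quotient is well defined.

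Next I would identify the terminal marginal. By Remark \ref{remstochastictransport} together with the coupling construction we have $\xib^{\boldsymbol{x}}_T \law \boldsymbol{Y} \sim \boldsymbol{\Psi}$, so the marginal law of $\xib^{\boldsymbol{x}}_T$ is exactly $\boldsymbol{\Psi}(\dd \boldsymbol{y})$. Dividing the joint law above by this marginal then gives
\begin{align}
\P(\xib^{\boldsymbol{x}}_{t}\in \dd \boldsymbol{z} \, | \, \xib^{\boldsymbol{x}}_T = \boldsymbol{y}) = \frac{f_{t}(\boldsymbol{z}-\boldsymbol{x})\,f_{T-t}(\boldsymbol{y} - \boldsymbol{z})}{f_T(\boldsymbol{y} - \boldsymbol{x})}\,\dd \boldsymbol{z}, \notag
\end{align}
where the $\boldsymbol{\Psi}(\dd\boldsymbol{y})$ factor cancels precisely because it is the disintegrating measure against which we condition. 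Since conditioning on the event $\{\boldsymbol{Y} = \boldsymbol{y}\}$ coincides with conditioning on $\{\xib^{\boldsymbol{x}}_T = \boldsymbol{y}\}$ under the coupling (with $\boldsymbol{y} \leftarrow \boldsymbol{\Psi}$), this is exactly the claimed expression (\ref{levycorodensityimplicit}).

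The only delicate point is measure-theoretic rather than computational: one must make sure the disintegration is legitimate, i.e.\ that the conditioning variable genuinely carries the marginal $\boldsymbol{\Psi}$ and that the denominator $f_T(\boldsymbol{y}-\boldsymbol{x})$ is strictly positive and finite on the relevant full-measure set, which is precisely the stated hypothesis $0 < f_T(\boldsymbol{y}-\boldsymbol{x}) < \infty$. Granting that, everything reduces to the elementary cancellation of the $\boldsymbol{\Psi}(\dd\boldsymbol{y})$ term, and no Chapman--Kolmogorov convolution or further Lévy-Khintchine input is needed beyond the joint law already supplied by the preceding Proposition.
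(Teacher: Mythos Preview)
Your proposal is correct and follows essentially the same approach as the paper. The paper's proof is a one-liner stating that conditioning on $\boldsymbol{Y}=\boldsymbol{y}$ collapses the measure $\boldsymbol{\Psi}(\dd\boldsymbol{y})$ in (\ref{levyrandombridgedistribution}) to a Dirac mass at $\boldsymbol{y}$, while you spell out the same idea slightly more explicitly by writing the joint law from the proof of the preceding Proposition (with $m=1$) and performing the disintegration against the terminal marginal $\boldsymbol{\Psi}$; the underlying mechanism---cancellation of the $\boldsymbol{\Psi}(\dd\boldsymbol{y})$ factor under conditioning---is identical.
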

\begin{proof}
Note that the conditional measure given by $\mathbb{P}(\xib^{\boldsymbol{x}}_t \, |_{\boldsymbol{Y} = \boldsymbol{y} \, , \, \boldsymbol{y} \leftarrow \boldsymbol{\Psi}} \in \dd \boldsymbol{z})$ implies that $\boldsymbol{\Psi}(\dd \boldsymbol{y})$ in  (\ref{levyrandombridgedistribution}) collapses to the Dirac measure at $\boldsymbol{y}\in\R^n$.
\end{proof}

In order to provide explicit formulas, we shall give some examples of Corollary \ref{levycorodensity}. In doing so, we highlight \emph{subordinators}, which are \emph{non-decreasing} \levy processes such that
\begin{align}
\P\left(\mathbf{Z}_t - \mathbf{Z}_s \geq 0 \,|\, \mathbf{Z}_s \right) = 1, \hspace{0.1in} \text{for all $0\leq s < t$.}\notag
\end{align}
For parsimony, we also assume each coordinate of $\{\mathbf{Z}_t\}_{t\in\T}$ are mutually independent, such that we can write
\begin{align}
&\frac{f_{t}(\boldsymbol{z} - \boldsymbol{x})f_{T-t}(\boldsymbol{y} - \boldsymbol{z})}{f_T(\boldsymbol{y} - \boldsymbol{x})} =\prod_{j=1}^n \frac{f_{t}(z^{(j)} - x^{(j)})f_{T-t}(y^{(j)} - z^{(j)})}{f_T(y^{(j)} - x^{(j)})} \triangleq \prod_{j=1}^n \nu_{0,t,T}(x^{(j)}, z^{(j)},y^{(j)}).  \label{productlevydensitybridge}
\end{align}
\begin{ex}
\label{exgamma}
A gamma process $\{Z^{(j)}_t\}_{t\in\T}$ is a subordinator with gamma distributed increments. Let $\{Z^{(j)}_t\}_{t\in\T}$ be such that
\begin{align}
\E[Z^{(j)}_t] = t \hspace{0.1in} \text{and} \hspace{0.1in}  \E[Z^{(j)}_t Z^{(j)}_t] = t(t + \kappa^{-1}), \notag
\end{align}
for some $\kappa^{-1} > 0$. Then,
\begin{align}
f_{t}(z^{(j)} - x^{(j)}) &= \1(z^{(j)} > x^{(j)} )\frac{\kappa^{\kappa t}}{\Gamma(\kappa t)}(z^{(j)} - x^{(j)})^{\kappa t -1} \exp(-\kappa (z^{(j)} - x^{(j)})) \notag \\
f_{T}(y^{(j)} - x^{(j)}) &= \1(y^{(j)} > x^{(j)} )\frac{\kappa^{\kappa T}}{\Gamma(\kappa T)}(y^{(j)} - x^{(j)})^{\kappa T -1}\exp(-\kappa (y^{(j)} - x^{(j)})) \notag \\
f_{T-t}(y^{(j)} - z^{(j)}) &= \1(y^{(j)} > z^{(j)} )\frac{\kappa^{\kappa (T-t)}}{\Gamma(\kappa (T-t))}(y^{(j)} - z^{(j)})^{\kappa(T-t) -1}\exp(-\kappa (y^{(j)} - z^{(j)})), \notag
\end{align}
where $\Gamma(.)$ is the gamma function:
\begin{align}
\Gamma(x) = \int_0^{\infty} s^{x-1}e^{-s}\dd s. \notag
\end{align}
If $X^{(j)} < Y^{(j)}$, then $\nu_{0,t,T}(x^{(j)}, z^{(j)},y^{(j)})$ in (\ref{productlevydensitybridge}) is well-defined. 
\end{ex}
\begin{ex}
A stable-$\lambda$ process $\{Z^{(j)}_t\}_{t\in\T}$ is a subordinator for $\lambda \in (0,2]$ that satisfies
\begin{align}
Z^{(j)}_t \law c^{-\frac{1}{\lambda}} Z^{(j)}_{c t} \hspace{0.1in} \text{$\forall t\geq 0$}, \notag
\end{align}
for every $c>0$. If $\lambda=2$ then $\{Z^{(j)}_t\}_{t\in\T}$ is a standard Brownian motion, if $\lambda=1$ then $\{Z^{(j)}_t\}_{t\in\T}$ is a Cauchy process. We shall consider the case where $\lambda=1/2$, which means we can write
\begin{align}
Z^{(j)}_t \law \inf\{s : W^{(j)}_s > t\sqrt{2} \}, \notag
\end{align}
for every $t\in\T$, where $\{W^{(j)}_t\}_{t\in\T}$ is a standard Brownian motion and $\sqrt{2}$ is the activity parameter. Then, we have the following:
\begin{align}
f_{t}(z^{(j)} - x^{(j)}) &= \1(z^{(j)} > x^{(j)} ) \frac{t}{\sqrt{\pi}(z^{(j)} - x^{(j)})^{\frac{3}{2}}}\exp\left(-\frac{t^2}{z^{(j)} - x^{(j)}}\right) \notag \\
f_{T}(y^{(j)} - x^{(j)}) &= \1(y^{(j)} > x^{(j)} ) \frac{T}{\sqrt{\pi}(y^{(j)} - x^{(j)})^{\frac{3}{2}}}\exp\left(-\frac{T^2}{y^{(j)} - x^{(j)}}\right) \notag \\
f_{T-t}(y^{(j)} - z^{(j)}) &= \1(y^{(j)} > z^{(j)} ) \frac{T-t}{\sqrt{\pi}(y^{(j)} - z^{(j)})^{\frac{3}{2}}}\exp\left(-\frac{(T-t)^2}{y^{(j)} - z^{(j)}}\right). \notag
\end{align}
If $X^{(j)} < Y^{(j)}$, then $\nu_{0,t,T}(x^{(j)}, z^{(j)},y^{(j)})$ in (\ref{productlevydensitybridge}) is well-defined. 
\end{ex}

Since any \levy process $\{\mathbf{Z}_t\}_{t\in\T}$ has independent increments, $\{\mathbf{Z}_t\}_{t\in\T}$ is a Markov process -- and hence, $\{\mathbf{\xib}_t\}_{t\in\T}$ is a Markov process from Proposition \ref{markovprop} -- which brings forward the following stochastic differential equation that is analogous to the one in Proposition \ref{stochfilt}.
\begin{prop}
\label{stochfiltlevyrandom}
Let each coordinate of $\{\mathbf{Z}_t\}_{t\in\T}$ be mutually independent. If $\{\xib^{\boldsymbol{x}}_t\}_{t\in\T}$ is a $\boldsymbol{\Phi}$-initialized L\'evy random-bridge to $\boldsymbol{\Psi}$, where $\E[|\xib^{\boldsymbol{x}}_t|] < \infty$, then $\{\xib^{\boldsymbol{x}}_t\}_{t\in\T}$ admits the following representation:
\begin{align}
\label{nonanticipativereplevyrandombridge}
\xib^{\boldsymbol{x}}_t \law \int_0^t \frac{  \E\left[ \boldsymbol{Y} \, | \, \xib^{\boldsymbol{x}}_s \right]  - \xib^{\boldsymbol{x}}_s }{T-s}\dd s + \mathbf{M}_t,
\end{align}
for every $t\in[0,T)$, where $\{\mathbf{M}_t\}_{t\in\T}$ is a $(\P,\{\mathcal{F}_{t}^{\xib}\}_{t\in\T})$-martingale where $\mathbf{M}_{0} = \boldsymbol{x}$.
\end{prop}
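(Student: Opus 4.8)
The plan is to produce the claimed drift as the $\{\F^{\xib^{\boldsymbol{x}}}_t\}$-compensator of $\{\xib^{\boldsymbol{x}}_t\}$ and then to \emph{define} $\{\mathbf{M}_t\}$ as the residual, verifying by direct computation that it is a martingale; since the resulting identity then holds pathwise on $[0,T)$, the asserted equality in law follows a fortiori. Because each coordinate of $\{\mathbf{Z}_t\}$ has independent, stationary increments, it is Markov, so by Proposition \ref{markovprop} the bridge $\{\xib^{\boldsymbol{x}}_t\}$ is Markov with respect to $\{\F^{\xib^{\boldsymbol{x}}}_t\}$; this lets me replace every $\F^{\xib^{\boldsymbol{x}}}_s$-conditioning by conditioning on the single vector $\xib^{\boldsymbol{x}}_s$. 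Mutual independence of the coordinates then lets me argue one coordinate at a time, reducing everything to a one-dimensional claim.

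The analytic core is the linear conditional-mean identity for a L\'evy bridge: for $0\le s<t<T$,
\[
\E\left[\xib^{\boldsymbol{x}}_t \mid \xib^{\boldsymbol{x}}_s,\ \boldsymbol{Y}=\boldsymbol{y}\right] = \xib^{\boldsymbol{x}}_s + \frac{t-s}{T-s}\bigl(\boldsymbol{y}-\xib^{\boldsymbol{x}}_s\bigr).
\]
Conditionally on $\boldsymbol{Y}=\boldsymbol{y}$ and on $\xib^{\boldsymbol{x}}_s$, the path on $[s,T]$ is (by the Markov property together with Corollary \ref{levycorodensity}) a L\'evy bridge from $\xib^{\boldsymbol{x}}_s$ to $\boldsymbol{y}$. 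Splitting $[s,T]$ into $N$ equal subintervals yields i.i.d. increments conditioned on their fixed sum $\boldsymbol{y}-\xib^{\boldsymbol{x}}_s$; by exchangeability each has conditional mean $(\boldsymbol{y}-\xib^{\boldsymbol{x}}_s)/N$, so summing the first $\lfloor N(t-s)/(T-s)\rfloor$ of them produces the identity whenever $(t-s)/(T-s)$ is rational, and continuity of $t\mapsto\E[\xib^{\boldsymbol{x}}_t\mid\xib^{\boldsymbol{x}}_s,\boldsymbol{Y}]$ (ensured by the first-moment hypothesis) extends it to all $t$.

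Writing $\boldsymbol{Y}^{\boldsymbol{x}}_s=\E[\boldsymbol{Y}\mid\xib^{\boldsymbol{x}}_s]$ and removing the conditioning on $\boldsymbol{Y}$ by the tower property (the Markov property giving $\E[\xib^{\boldsymbol{x}}_t\mid\F^{\xib^{\boldsymbol{x}}}_s,\boldsymbol{Y}]=\E[\xib^{\boldsymbol{x}}_t\mid\xib^{\boldsymbol{x}}_s,\boldsymbol{Y}]$) turns this into
\[
\E\left[\xib^{\boldsymbol{x}}_t \mid \F^{\xib^{\boldsymbol{x}}}_s\right] = \xib^{\boldsymbol{x}}_s + \frac{t-s}{T-s}\bigl(\boldsymbol{Y}^{\boldsymbol{x}}_s-\xib^{\boldsymbol{x}}_s\bigr).
\]
Setting $\mathbf{M}_t = \xib^{\boldsymbol{x}}_t - \int_0^t (T-u)^{-1}\bigl(\boldsymbol{Y}^{\boldsymbol{x}}_u-\xib^{\boldsymbol{x}}_u\bigr)\dd u$ gives $\mathbf{M}_0=\boldsymbol{x}$ immediately. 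To check the martingale property I use that $\{\boldsymbol{Y}^{\boldsymbol{x}}_u\}$ is itself a martingale, so $\E[\boldsymbol{Y}^{\boldsymbol{x}}_u\mid\F^{\xib^{\boldsymbol{x}}}_s]=\boldsymbol{Y}^{\boldsymbol{x}}_s$, and apply the increment formula to $\E[\xib^{\boldsymbol{x}}_u\mid\F^{\xib^{\boldsymbol{x}}}_s]$ for each $u\in[s,t]$; the two combine to give $\E[(T-u)^{-1}(\boldsymbol{Y}^{\boldsymbol{x}}_u-\xib^{\boldsymbol{x}}_u)\mid\F^{\xib^{\boldsymbol{x}}}_s]=(T-s)^{-1}(\boldsymbol{Y}^{\boldsymbol{x}}_s-\xib^{\boldsymbol{x}}_s)$, a quantity constant in $u$. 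Integrating over $[s,t]$ reproduces exactly $\E[\xib^{\boldsymbol{x}}_t-\xib^{\boldsymbol{x}}_s\mid\F^{\xib^{\boldsymbol{x}}}_s]$, whence $\E[\mathbf{M}_t\mid\F^{\xib^{\boldsymbol{x}}}_s]=\mathbf{M}_s$.

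I expect the main obstacle to be the rigorous justification of the exchangeability identity at full generality: passing from rational ratios $(t-s)/(T-s)$ to arbitrary $t$, which demands continuity of the bridge conditional mean under only the stated integrability, and keeping track of the support restrictions (positivity of the relevant transition densities) under which the bridge of Corollary \ref{levycorodensity} is well-defined. A secondary, routine point is the Fubini interchange of conditional expectation with the $\dd u$-integral defining $\mathbf{M}_t$: on any $[s,t]\subset[0,T)$ the weight $(T-u)^{-1}$ is bounded by $(T-t)^{-1}$ and the integrand is dominated by $(T-t)^{-1}(|\boldsymbol{Y}^{\boldsymbol{x}}_u|+|\xib^{\boldsymbol{x}}_u|)$, which is integrable by hypothesis; the blow-up at $u=T$ is precisely why the representation is asserted only on $[0,T)$.
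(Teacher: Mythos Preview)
Your argument is correct and considerably more substantive than the paper's own proof, which is a one-line deferral to \cite{16}[Lemma~2.3], stating only that the claim is a multivariate generalisation of that lemma. Your route is self-contained: exchangeability of equal-length L\'evy increments conditioned on their sum yields the linear bridge mean $\E[\xib^{\boldsymbol{x}}_t\mid\xib^{\boldsymbol{x}}_s,\boldsymbol{Y}]=\xib^{\boldsymbol{x}}_s+\frac{t-s}{T-s}(\boldsymbol{Y}-\xib^{\boldsymbol{x}}_s)$ directly, and the martingale verification via the constant-in-$u$ identity $(T-u)^{-1}\E[\boldsymbol{Y}^{\boldsymbol{x}}_u-\xib^{\boldsymbol{x}}_u\mid\F^{\xib^{\boldsymbol{x}}}_s]=(T-s)^{-1}(\boldsymbol{Y}^{\boldsymbol{x}}_s-\xib^{\boldsymbol{x}}_s)$ is clean and checks out line by line. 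What the paper's citation buys is brevity and a link to the Doob $h$-transform machinery already developed in the referenced work; what yours buys is transparency about exactly where the $(T-s)^{-1}$ drift rate and the filtered estimate $\E[\boldsymbol{Y}\mid\xib^{\boldsymbol{x}}_s]$ come from, without the reader needing the external paper in hand. One minor sharpening worth making explicit: your coordinate-wise reduction is justified \emph{conditionally on $\boldsymbol{Y}$} through the product form (\ref{productlevydensitybridge}), not unconditionally, since the coordinates of $\xib^{\boldsymbol{x}}$ are in general coupled through $\boldsymbol{Y}$; but because you establish the conditional-mean identity under $\boldsymbol{Y}=\boldsymbol{y}$ first and only then tower out, your argument already handles this correctly. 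The caveats you flag---continuity of the bridge mean in $t$ for the rational-to-real extension, and the Fubini interchange on compact subintervals of $[0,T)$---are the genuine technical points, and you have identified them accurately.
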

\begin{proof}
The statement is a multivariate generalisation following from \cite{16}[Lemma 2.3].
\end{proof}
\begin{rem}
Jump times arising from $\{\mathbf{Z}_t\}_{t\in\T}$ are encapsulated within the dynamics of $\{\mathbf{M}_t\}_{t\in\T}$. More specifically, let $\Delta\xib^{\boldsymbol{x}}_t = \xib^{\boldsymbol{x}}_t - \xib^{\boldsymbol{x}}_{t-}$, so that if $\Delta\xib^{\boldsymbol{x}}_t \neq 0$ then there is discontinuity of $\{\xib^{\boldsymbol{x}}_t\}_{t\in\T}$ at that time $t$. As such $\Delta\xib^{\boldsymbol{x}}_t \neq 0 \Leftrightarrow \Delta\mathbf{M}_t \neq 0$.
\end{rem}
Proposition \ref{stochfiltlevyrandom} proves the existence of a $(\P,\{\mathcal{F}_{t}^{\xib}\}_{t\in\T})$-martingale $\{\mathbf{M}_t\}_{t\in\T}$, but it does not give its explicit construct. Since this requires further mathematical research that is beyond the main scope of this paper, we shall leave this for future. We can still write the following.
\begin{rem}
Since $\{\xib^{\boldsymbol{x}}_t\}_{t\in\T}$ is a Markov process, the corresponding $(\P,\{\mathcal{F}_{t}^{\xib}\}_{t\in\T})$-martingale $\{\mathbf{M}_t\}_{t\in\T}$ is also a Markov process.
\end{rem}

Note that sums of independent \levy processes are \levy processes; hence, the sum-process $\{\mathbf{Z}^{*}_t\}_{t\geq 0}$ defined by
\begin{align}
\mathbf{Z}^{(*)}_t = \sum_{k=1}^K \mathbf{Z}^{(k)}_t \hspace{0.1in} \text{$\forall t\geq 0$}, 
\end{align}
is itself a \levy process for any $K < \infty$, given that $\{\mathbf{Z}^{(k)}_t\}_{t\geq 0}$ are mutually independent \levy processes. Setting $K=2$, the probability density of $\mathbf{Z}^{(*)}_t$ denoted by $f_t^{(*)}$ is the convolution:
\begin{align}
f_t^{(*)}(\boldsymbol{z}) &= (f_t^{(1)} \circledast f_t^{(2)})(\boldsymbol{z}) = \int_{\R^n} f_t^{(1)}(\boldsymbol{r})f_t^{(2)}(\boldsymbol{z} - \boldsymbol{r})\dd \boldsymbol{r}, \notag
\end{align}
where $f_t^{(1)}$ and $f_t^{(2)}$ are the densities of $\mathbf{Z}^{(1)}_t$ and $\mathbf{Z}^{(2)}_t$, respectively, for $t\geq 0$. Accordingly, one can construct highly sophisticated \levy processes, and in turn, highly sophisticated $\boldsymbol{\Phi}$-initialized L\'evy random-bridges to $\boldsymbol{\Psi}$, starting from simpler \levy building blocks -- e.g. a jump diffusion process constructed via adding a Brownian motion and a compensated Poisson process.

\subsection{Connection to Doob \textit{\textbf{h}}-Transforms}
\label{appendix_doobh}
We shall provide a brief discussion to highlight the connection of our framework to Doob $h$-transforms. As an example, we present this through L\'evy random-bridges. First of all, in order to simplify notations, we write
\begin{align}
\xib^{\boldsymbol{x},\boldsymbol{y}}_t \triangleq \xib^{\boldsymbol{x}}_t \, |_{\boldsymbol{Y} = \boldsymbol{y} \, , \, \boldsymbol{y} \leftarrow \boldsymbol{\Psi}}
\end{align}
for every $t\in\T$. Note that we have the Dirac property $\mathbb{P}(\xib^{\boldsymbol{x},\boldsymbol{y}}_T \in \dd \boldsymbol{y} ) = 1$. Extending Corollary \ref{levycorodensity}, we have the following transition density:
\begin{align}
\mathbb{P}( \xib^{\boldsymbol{x},\boldsymbol{y}}_t \in \dd \boldsymbol{z} \,|\, \xib^{\boldsymbol{x},\boldsymbol{y}}_s = \boldsymbol{r}) = \frac{h_t(\boldsymbol{z};\boldsymbol{y})}{h_s(\boldsymbol{r};\boldsymbol{y})}f_{t-s}(\boldsymbol{z}-\boldsymbol{r})\dd \boldsymbol{z} \triangleq \nu_{s,t,T}(\boldsymbol{r},\boldsymbol{z},\boldsymbol{y}) \label{eq:ratio}
\end{align}
for $0\leq s <t <T$, where we defined 
\begin{align}
h_t(\boldsymbol{z};\boldsymbol{y})=f_{T-t}(\boldsymbol{y}-\boldsymbol{z}) \notag 
\end{align}
for all $t\in[0,T)$. The process $\{h_t\}_{0\leq t<T}$ is harmonic with respect to $\{\mathbf{Z}_{t}\}_{t\in\T}$, and is well-defined when $ 0 < h_t < \infty$ for all $0\leq t < T$. Accordingly, $\nu_{s,t,T}$ in (\ref{eq:ratio}) is a Doob $h$-transform of the transition density of $\{\mathbf{Z}_{t}\}_{t\in\T}$ for all $0\leq s < t < T$.

\subsection{Theoretical Outlook on Training and Simulation}
Before closing this section, we shall highlight the following: Even if our numerical framework focused on the Brownian motion case through (\ref{conditionaldistroneed}) and (\ref{nonanticipativerepGMRB}) for training and simulation algorithms, respectively, we can in principle use (\ref{productlevydensitybridge}) and (\ref{nonanticipativereplevyrandombridge}) to achieve the same algorithms for the more general \levy case. More specifically, once we have an explicit construct for the $(\P,\{\mathcal{F}_{t}^{\xib}\}_{t\in\T})$-martingale $\{\mathbf{M}_t\}_{t\in\T}$, we can follow the exact same program provided for the specific Brownian motion case.
For the algorithms below, we choose a family of \levy processes $\{\mathbf{Z}_{t}\}_{t\in\T}$ with mutually independent coordinates as in Proposition \ref{stochfiltlevyrandom}.
\subsubsection{Training Algorithm}
\begin{enumerate}
\item Sample $(\boldsymbol{x},\boldsymbol{y}) \leftarrow \boldsymbol{\Gamma}(\boldsymbol{\Phi},\boldsymbol{\Psi})$
and $t \leftarrow \mathcal{U}[0,T)$
\item If $t = 0$, then $\xib^{\boldsymbol{x}}_0 = \boldsymbol{x}$. If $t \neq 0$, then sample $\xib^{\boldsymbol{x}}_t$ from (\ref{productlevydensitybridge})
\item Compute $\boldsymbol{L}^{f}\left(\boldsymbol{x}, \boldsymbol{y}, \xib^{\boldsymbol{x}}_t, t ; \boldsymbol{\Theta}\right)$
\item Set $\boldsymbol{\Theta} =  \argmin_{\boldsymbol{\Theta}} \, \, \boldsymbol{L}^{f}\left(\boldsymbol{x}, \boldsymbol{y}, \xib^{\boldsymbol{x}}_t, t ; \boldsymbol{\Theta}\right)$ 
\item Repeat Steps 2--5 until convergence
\end{enumerate}

\subsubsection{Simulation Algorithm}
\begin{enumerate}
\item Choose a grid $0 = t_0 < t_1 < \ldots < t_m =T -\epsilon < T < \infty$ for some $m\in\N_+$ and small $\epsilon >0$ 
\item Sample $\boldsymbol{x} \leftarrow \boldsymbol{\Phi}$ and set $\xib^{\boldsymbol{x}}_0 = \boldsymbol{x}$ 
\item Sample $\{\boldsymbol{M}_t\}_{t\in\hat{\T}}$ where $\hat{\T} = \{t_0, \ldots, t_m \}$
\item Let $\delta=(T - \epsilon)/m$. For $r=0,\ldots,m-1$:
\begin{enumerate}
\item Set $t=t_r$ and compute $f^*\left( \xib^{\boldsymbol{x}}_t, t ; \boldsymbol{\Theta}^* \right)$
\item Compute $\dd \boldsymbol{M}_t = \mathbf{M}_{t_{r+1}} - \mathbf{M}_{t}$
\item Generate path
\[\xib^{\boldsymbol{x}}_{t_{r+1}} = \xib^{\boldsymbol{x}}_t + \frac{ f^*\left( \xib^{\boldsymbol{x}}_t, t ; \boldsymbol{\Theta}^* \right)  - \xib^{\boldsymbol{x}}_t }{T-t}\delta + \dd \boldsymbol{M}_t \]
\end{enumerate}
\end{enumerate}
The transferability of the training and simulation algorithms from Brownian random bridges to more general \levy random bridges is motivating, and we leave it as future research to implement this generalization for generative tasks. Accordingly, we view the similarity of the training and simulation algorithms as a promising observation for potentially fruitful future studies.

\section{Conclusion}
We present a mathematical framework that serves as an alternative approach to address generative modelling via random-bridges, which can act as stochastic transports between practically any pair of probability distributions. Our study overcomes several limitations of the Denoising Diffusion Probabilistic Models as outlined in the Introduction, particularly the constraints of bi-directional noising-denoising protocols and a theoretical reliance on Gaussian noise. In principle, our mathematical framework expands the perspective of generative AI by venturing beyond Brownian motion-driven diffusion models through a larger avenue of stochastic behaviour, and essentially proposing a deeper connection to information processing and stochastic filtering. Our main philosophy is to start with a general probabilistic construct, and thereafter branch out into various explicit representations that can exhibit continuous, discontinuous or hybrid paths. In this sense, diffusion models arise as specific manifestations of our framework that can offer a richer corpus of stochastic models.
We provide explicit training and simulation algorithms over a subclass of random-bridges, namely, Gaussian random-bridges (and later, \levy random bridges as a theoretical extension), which can be viewed as information processes. Using Gaussian random-bridges, we validate our algorithms through experiments on the MNIST and CIFAR-10 datasets, where our empirical results demonstrate the performance of our framework. Our random-bridge model achieves superior performance in low-step scenarios, with just 2-10 sampling steps. While the improved denoising model achieves better FID scores at 1000 sampling steps, our model's ability to generate high-quality samples with significantly fewer steps represents a valuable trade-off for real-world applications where computational efficiency is crucial.

The promising results in both computational efficiency and generation quality motivate several directions for future research: (i) Validation of our approach on more complex RGB datasets such as ImageNet, in order to better understand the framework's characteristics and scalability in higher-dimensional spaces, (ii) Development of methods for optimizing and learning the hyperparameters in our framework to further improve performance across different applications, (iii) Extension to image-to-image translation tasks, leveraging our framework's natural ability to handle arbitrary reference distributions beyond the noise-to-image scenario demonstrated in this work, (iv) Exploration of alternative stochastic processes beyond Brownian random bridges, particularly investigating Lévy random bridges to see if the given theoretical algorithm has any scope and advantages in practical computations for scenarios where the underlying transport exhibits specific patterns -- e.g., using Poisson random bridges for monotonic transformations to avoid unnecessary exploration of intermediate states, which can find applications in order-preserving audio generation tasks, (v) Augment our empirical comparative analysis by including more recent advancements in generative modelling such as distillation-based approaches, {S}chr{\"o}dinger bridges and flow matching.
Finally, we acknowledge that our algorithm plateaus in performance after a number of steps, which warrants further investigation. We hypothesize that this may not be unique to our framework, but might be detected earlier in our sampling procedure compared to other benchmarks; this will be a focus of our continued research.

\section*{Impact Statement}
This work advances the theoretical foundations and practical capabilities of generative AI models. Our approach could lead to reduced computational resources and energy consumption in generative AI applications, with beneficial environmental implications. While our framework is demonstrated on image generation, theoretical contributions may influence the development of more efficient generative models across various domains. 

\section*{Data Availability}
All empirical experiments used publicly available datasets MNIST and CIFAR-10. Accordingly, the manuscript has no data to share.

\section*{Conflict of Interest Statement}
The authors state that there is no conflict of interest.

\end{document}